\documentclass{article}

% if you need to pass options to natbib, use, e.g.:
% \PassOptionsToPackage{numbers, compress}{natbib}
% before loading nips_2018

% ready for submission
%\usepackage{nips_2018}

% to compile a preprint version, e.g., for submission to arXiv, add
% add the [preprint] option:
 \usepackage[preprint, nonatbib]{nips_2018}

% to compile a camera-ready version, add the [final] option, e.g.:
% \usepackage[final]{nips_2018}

% to avoid loading the natbib package, add option nonatbib:
% \usepackage[nonatbib]{nips_2018}

\usepackage[utf8]{inputenc} % allow utf-8 input
\usepackage[T1]{fontenc}    % use 8-bit T1 fonts
\usepackage{hyperref}       % hyperlinks
\usepackage{url}            % simple URL typesetting
\usepackage{booktabs}       % professional-quality tables
\usepackage{amsfonts}       % blackboard math symbols
\usepackage{nicefrac}       % compact symbols for 1/2, etc.
\usepackage{microtype}      % microtypography
\usepackage{amsmath, amssymb, amsthm, mathtools, mathrsfs}
\usepackage{dsfont}
\usepackage{xcolor}
\usepackage{subfig}
\usepackage{graphicx}
\usepackage{booktabs}
\usepackage{graphicx}
\usepackage[algo2e,ruled]{algorithm2e}
\usepackage{enumitem}
% \usepackage[CJKmath=true]{xeCJK}
%\usepackage[colorlinks=true, allcolors=blue]{hyperref}
%% new definitions
\theoremstyle{plain}
\newtheorem{theorem}{Theorem}[section]

\newtheorem{corollary}[theorem]{Corollary}
\newtheorem{lemma}[theorem]{Lemma}

\theoremstyle{definition}

\theoremstyle{remark}

\newtheorem{remark}{Remark}[section]

\def\E{\mathbb{E}}
% \newcommand{\norm}[1]{\left\Vert#1\right\Vert}
% indent
%\setlength{\parindent}{1.5em}

%%% Special Symbols %%%%%%%%%%%%%%%%%%%%%%%%%%%%%%%%%%%%%%%%%%%%%%%%%%%%%%%%%%
%% 符号
      % 自然数
      % 整数
\newcommand*{\R}{\mathbb{R}}      % 实数
   % 微分
 
     % 期望
    % 概率
\newcommand*{\huaF}{\mathcal{F}}  % 花F 函数类

\newcommand*{\sfN}{\mathsf{N}}    % 神经网络
% \newcommand*{\sfPN}{\mathsf{PN}}  % x^k 神经网络
  % 花N covering number
\newcommand*{\bigO}{\mathcal{O}}  % 大O
\newcommand*{\huaR}{\mathcal{R}}  % Rademacher
  % Gaussian Complexity
\newcommand*{\Hil}{\mathcal{H}}   % Hilbert 空间
     % e
\newcommand*{\eps}{\varepsilon}   % ε
\newcommand*{\transpose}{\mathrm{T}}
%% 算符

% \renewcommand*{\bar}[1]{\overline{#1}}

\DeclareMathOperator*{\argmax}{\arg\max}
%% 自动调整大小的括号们
\DeclarePairedDelimiter{\bk}{(}{)} % 括号
\DeclarePairedDelimiter{\abs}{\lvert}{\rvert}
\DeclarePairedDelimiter{\norm}{\lVert}{\rVert}
               % 方（fan）括号
\DeclarePairedDelimiter{\hua}{\{}{\}}             % 花（hua）括号
   % 内（nei）积

%% 格式
\newcommand*{\vect}[1]{\mathbf{#1}}
%% LaTeX
% 用 \numberthis 编号
\newcommand\numberthis{\addtocounter{equation}{1}\tag{\theequation}}
% swap bk
\makeatletter
\let\oldbk\bk
\def\bk{\@ifstar{\oldbk}{\oldbk*}}
\makeatother
% command for independent

% end of command for independent
% \newtheorem{thm}{Theorem}
% 链接颜色
\hypersetup{
    colorlinks,
    linkcolor={red!50!black},
    citecolor={blue!50!black},
    urlcolor={blue!80!black}
}
% 允许在公式中间分页
\allowdisplaybreaks
%%% END %%%%%%%%%%%%%%%%%%%%%%%%%%%%%%%%%%%%%%%%%%%%%%%%%%%%%%%%%%

\title{On the Study of Sample Complexity for Polynomial Neural Networks}

% The \author macro works with any number of authors. There are two
% commands used to separate the names and addresses of multiple
% authors: \And and \AND.
%
% Using \And between authors leaves it to LaTeX to determine where to
% break the lines. Using \AND forces a line break at that point. So,
% if LaTeX puts 3 of 4 authors names on the first line, and the last
% on the second line, try using \AND instead of \And before the third
% author name.

\author{
  %David S.~Hippocampus\thanks{Use footnote for providing further
%    information about author (webpage, alternative
%    address)---\emph{not} for acknowledging funding agencies.} \\
%  Department of Computer Science\\
%  Cranberry-Lemon University\\
%  Pittsburgh, PA 15213 \\
%  \texttt{hippo@cs.cranberry-lemon.edu} \\
Chao Pan\\
Department ECE\\
UIUC\\
\texttt{chaopan2@illinois.edu}\\
\And
Chuanyi Zhang\\
Department ECE\\
UIUC\\
\texttt{chuanyi5@illinois.edu}\\
}

\begin{document}
% \nipsfinalcopy is no longer used

\maketitle

\begin{abstract}
  As a general type of machine learning approach, artificial neural networks have established state-of-art benchmarks in many pattern recognition and data analysis tasks. Among various kinds of neural networks architectures, polynomial neural networks (PNNs) have been recently shown to be analyzable by spectrum analysis via neural tangent kernel, and particularly effective at image generation and face recognition. However, acquiring theoretical insight into the computation and sample complexity of PNNs remains an open problem. In this paper, we extend the analysis in previous literature to PNNs and obtain novel results on sample complexity of PNNs, which provides some insights in explaining the generalization ability of PNNs.
\end{abstract}
\section{Introduction}
% It is well-known that training neural networks are computationally expensive. Intuitively, when more layers are added to a neural network, more parameters are need to be learned through training. 
Neural networks are powerful tools in the area of machine learning, especially deep learning. And it is observed that we can normally get better performance when we increase the number of parameters in neural networks, such as the width (the dimensions of parameter matrices each layer) and the depth (the number of layers). By intuition, when the number of parameters grows neural networks will have the potential to overfit the training data, resulting in a larger generalization error. However, it is surprising that neural networks as deep as $1,000$ layers can still possess remarkable generalization ability~\cite{li2021training}. This inspires researchers to study the representation power of neural networks.
% neural networks can magically avoid such kind of situation. Although understanding the nature of how neural networks work is still an open problem, there are some recent papers trying to explain the reasons behind it, which makes us quite interested in this problem.

It is well-known that generalization error is related to sample complexity of the function class of neural networks~\cite{bousquet2003introduction}. Therefore, most works start from the analysis of sample complexity in neural networks. In~\cite{neyshabur2015norm}, by inductively applying contraction principle on single layer of a neural network, it gives a bound on the network generalization error which has exponential dependence on the network depth $d$. With further investigation,~\cite{bartlett2017spectrally} shows that this bound can be tightened to polynomial dependence on $d$, and~\cite{zhu2022controlling} studies the sample complexities of the Coupled CP-decomposition model (CCP) and Nested Coupled CP-decomposition model (NCP) of polynomial neural networks (PNNs), which also have polynomial dependence on the number of layers. However, in practice, people have observed that the number of training samples required for convergence does not necessarily goes up when a network becomes deeper.~\cite{golowich2018size} proves that, under certain conditions on norms of weight matrices and assumptions on the activation functions, we can achieve a depth-independent bound on the sample complexity.

In this paper, we mainly focus on developing a similar depth-independent bound on sample complexity of a more general class of PNNs, which is quite different from the architectures that previous works have considered. Specifically, we use $\sigma(x)=x^k\ (k\geq 2)$ as our activation function. There are two main reasons why PNNs are of particular interest in this paper. First, PNNs are easy to analyze because of the simple structure.~\cite{chrysos2020p, choraria2022spectral} study the representation power of one instance of PNNs called $\Pi$-Nets.~\cite{livni2014approx,soltani2018towards,du2018power} show that PNNs can efficiently approximate other networks. Specifically, it only requires $\bigO(\log\log(1/\epsilon))$ polynomial activation layers to approximate a 2-layer sigmoid networks with $l_1$ regularization, where $\epsilon$ indicates the approximation error. Beside this,~\cite{livni2014approx} shows that the training technique for such networks is very similar to SGD, and can be computed efficiently for each iteration. The other reason is that we are still lacking an understanding of the generalization ability of PNNs. Therefore, our contribution in this paper is an in-depth study on the sample complexity of PNNs, and we show that the complexity can be depth-independent under certain conditions, which may shed some light in understanding the recent success of deep PNNs in image generation~\cite{karras2019style} and face recognition~\cite{chrysos2020p}. Note that most previous works on PNNs only focus on one specific choice of $k$ (i.e., $k=2$), but our conclusion in this paper is applicable to any $k\geq 2$.

\section{Preliminaries}
This section introduces the notations used in later context. We use bold-faced lower case letters for vectors, non-bold lower case letters for scalars, and capital letters for matrices or fixed parameters.

For a vector $\mathbf{w}\in \mathbb{R}^n$, its $L_p$ norm is denoted as $\|\vect{w}\|_p=\bk{\sum_{i=1}^n|w_i|^p}^{\frac{1}{p}}$, and $\|\vect{w}\|=\|\vect{w}\|_2$. For a matrix $W \in\mathbb{R}^{h\times n}$, 
$\norm*{W}_{q,p} = \bk{\sum_{j}\bk{\sum_k |W_{j,k}|^q}^{\frac{p}{q}}}^{\frac{1}{p}} = \bk{\sum_{j}\norm*{\vect{w}_j}_q^{p}}^{\frac{1}{p}}$, where $\vect{w}_j$ is the $j$-th row of matrix $W$. Note that our definition of $\norm*{W}_{q,p}$ is a bit different from the $L_{p,q}$ norm of a matrix, and it is for simplicity of analysis.
% $\norm*{W}_{p,q} = \bk{\sum_{k}\bk{\sum_j |W_{j,k}|^p}^{\frac{q}{p}}}^{\frac{1}{q}} = \bk{\sum_{k}\norm*{\vect{w}_k}_p^{q}}^{\frac{1}{q}}$, where $\vect{w}_k$ is the $k$-th column of matrix $W$.
$\norm*{W}_F$ refers to Frobenius norm and $\|W\|$ refers to spectral norm. $\norm*{W}_p$ denotes Schatten $p$-norm of the spectrum of $W$ with $p\geq 1$, which is defined by $\norm*{W}_p=\left(\sum_{i}\sigma_i^p(W)\right)^{\frac{1}{p}}$ and $\sigma_i(W)$'s are singular values of $W$.

A cascade neural network with depth $d$ is defined as follows:
\[
	\sfN^{(d)}: \vect{x} \rightarrow W_d\sigma_{d-1}(W_{d-1}\sigma_{d-2}(\dots \sigma_1(W_1\vect{x}))).
\]
And to simplify the notation, we denote $W_b^r$ as the shorthand for the matrix tuple $\{W_b,W_{b+1},\dots,W_r\}$, and $\sfN_{W_b^r}$ denotes the function computed by the sub-network composed of layers $b$ through $r$:
\[
	\sfN_{W_b^r}: \vect{x} \rightarrow W_r\sigma_{r-1}(W_{r-1}\sigma_{r-2}(\dots \sigma_b(W_b\vect{x}))).
\]

Given a real-valued function class $\mathcal{H}$ and some set of data points $\vect{x}_1,\dots,\vect{x}_m\in\mathcal{X}$, the (empirical) Rademacher complexity $\hat{\mathcal{R}}_m(\mathcal{H})$ is
\[
	\hat{\mathcal{R}}_m(\mathcal{H}) = \E_{\eps^m}\sup_{h\in\mathcal{H}}\frac{1}{m}\sum_{i=1}^m\eps_ih(\vect{x}_i),
\]
where $\eps_i$ is the Rademacher random variable, which takes value $+1$ and $-1$ with equal probability. Throughout this paper, $\E$ is a shorthand for $\E_{\eps^m}$.

Our main results provide upper bounds on the Rademacher complexity with respect to PNNs under mild assumptions on input data.

\section{Related Works}
\label{sec:literStudy}
Classical results on the sample complexity of neural networks normally have strong dependency on the dimensions of parameter matrices. The upper bound shown in~\cite{anthony1999neural} is based on VC dimension and strongly rely on both the depth and the width of the network. In~\cite{neyshabur2015norm}, the authors make use of contraction principle and show an upper bound that have an exponential dependence on the network depth $d$, even if the norm of each parameter matrix is strictly upper bounded. Specifically, if input data $\vect{x}_i, i\in[m]$ satisfies $||\vect{x}_i||\leq B$ and each parameter matrix $W_j,j\in[d]$ satisfies $||W_j||_F\leq M_F(j)$, then under some assumptions on the activations, the results in~\cite{neyshabur2015norm} show that
$$
\hat{\huaR}_m\bk{\huaF\bk{X^m}} \sim \bigO\bk{\frac{ B2^d\prod_{j=1}^d M_F(j) }{ \sqrt{m}}},
$$
where $X^m$ is a matrix consisting of all input data points. Although this bound has no explicit dependence on the network width (the dimensions of $W_j,j\in[d]$), it has an exponential dependence on the network depth $d$, even if $M_F(j)\leq 1, \forall j\in[d]$, which is obviously not desirable.~\cite{bartlett2017spectrally} improves this exponential dependency to polynomial dependency based on a covering numbers argument. The authors show that 
$$\hat{\huaR}_m\bk{\huaF\bk{X^m}} \sim \tilde{\bigO} \bk{ B\prod_{j=1}^d \norm*{W_j} \sqrt{\frac{d^{3} }{ m}} },
$$ 
where some lower-order logarithmic factors are ignored. Although the dependency of $d$ becomes polynomial (i.e., $d^{3/2}$) in the new bound, the bound becomes trivial when $d\geq \Omega(m^{1/3})$, since the bound will not decrease with the growth of sample size $m$ in this case, indicating that it is not possible to reduce generalization error by increasing the size of training data. This is obviously not consistent with our observations during the training process of deep neural networks~\cite{li2021training}.

% since networks with residual learning as backbones can easily have over 100 layers. Then if this polynomial bound is tight, it  means that those networks would require hundreds of millions of training examples until the generalization error is small, which is not what we observe in practice.

In~\cite{golowich2018size} the authors show another line of proof techniques that gives rise to the first \emph{depth-independent} upper bound on sample complexity of neural networks. Their main idea follows these two observations:
\begin{itemize}
\item Each single layer of a neural network is equivalent to a generalized linear classifier of the form $\left\{\mathbf{x} \mapsto \sigma\left(\mathbf{w}^{\top} \mathbf{x}\right):\|\mathbf{w}\| \leq M,\|\vect{x}\|\leq B\right\}$, where $\sigma(\cdot)$ is some common activation function such as ReLU. And the generalization error of this function class is known to be $\bigO\bk{\frac{MB}{\sqrt{m}}}$.
\item We can also view the classifier stated above as a class of ``ultra-thin'' network with following form:
\[ \huaF:=\left\{\sigma_d(w_d\sigma_{d-1}(w_{d-1}\sigma_{d-2}(\dots \sigma_1(\vect{w}_1^{\top}\vect{x})))), \|\vect{w}_1\|\cdot\prod_{j=2}^d |w_j|\leq M\right\}. \]
Note that here $\vect{w}_1$ is a vector and $w_2,\dots,w_d$ are scalars. Therefore, for this ultra-thin network, we should still have $\hat{\huaR}_m\bk{\huaF\bk{X^m}} \sim \bigO\bk{\frac{MB}{\sqrt{m}}}$. The only difference between real neural networks and this ``ultra-thin'' network is that $W_2,\dots,W_d$ are matrices in real neural networks. Therefore, one could image that the change would take place in the constraint $\prod_{j=1}^d \norm*{W_j}\leq M$, but the upper bound should remain independent of depth $d$.
\end{itemize}

With these two observations,~\cite{golowich2018size} proposes the following two theorems:
\begin{theorem}[\cite{golowich2018size}]\label{thm:alternative-net_3.1}
  For any $p\geq 1$ and any net $\sfN^{(d)}$ such that $\prod_{j=1}^d\norm{W_j}\geq \Gamma $ and $\prod_{j=1}^d\norm{W_j}_p \leq M $, for $\forall r \in [d]$, there exists an alternative net $\widetilde{\sfN}^{(d)} $ such that they are identical except for weights of the $r$-th layer: $\widetilde{W}_r$. 
  $\widetilde{W}_r = s\vect{u}\vect{v}^T $, where $s, \vect{u}, \vect{v}$ are the most significant singular value and corresponding left and right singular vectors of $W_r$.
  \[ \sup_{\vect{x}\in\mathcal{X}} \norm*{\sfN^{(d)}(\vect{x}) - \widetilde{\sfN}^{(d)}(\vect{x}) } \leq B \prod_{j=1}^d \norm{W_j} \bk{\frac{2p\log(M/\Gamma)}{r}}^{1/p}. \]
\end{theorem}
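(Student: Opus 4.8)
The plan is to decouple the statement into two independent ingredients: a \emph{perturbation-propagation} estimate that controls the output change caused by replacing a single weight matrix with its best rank-one approximation, and a \emph{spectral pigeonhole} argument that locates a layer for which this change is provably small. Throughout I will use only that each activation $\sigma_j$ is $1$-Lipschitz with $\sigma_j(\vect{0})=\vect{0}$ (so that $\norm{\sigma_j(\vect{a})}\le\norm{\vect{a}}$ and $\norm{\sigma_j(\vect{a})-\sigma_j(\vect{b})}\le\norm{\vect{a}-\vect{b}}$), together with the fact that for the best rank-one truncation $\widetilde{W}_r=s\vect{u}\vect{v}^{\transpose}$ one has $\norm{W_r-\widetilde{W}_r}=s_2(W_r)$ and $\norm{W_r}=s_1(W_r)$, where $s_1\ge s_2\ge\cdots$ denote the singular values of $W_r$.

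For the first ingredient, fix a layer $j$ and let $f_j(\vect{x})$ denote the input fed to layer $j$. Since each $\sigma_i$ is norm-nonexpansive and each linear map scales norms by at most $\norm{W_i}$, a one-line induction gives $\norm{f_j(\vect{x})}\le B\prod_{i<j}\norm{W_i}$. Replacing $W_j$ by $\widetilde{W}_j$ injects a discrepancy $\norm{(W_j-\widetilde{W}_j)f_j(\vect{x})}\le\norm{W_j-\widetilde{W}_j}\,\norm{f_j(\vect{x})}$ at the pre-activation of layer $j$; feeding this forward, the $1$-Lipschitzness of the remaining activations and the operator norms of $W_{j+1},\dots,W_d$ amplify it by at most $\prod_{i>j}\norm{W_i}$. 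Collecting factors yields
\[
  \sup_{\vect{x}\in\mathcal{X}}\norm{\sfN^{(d)}(\vect{x})-\widetilde{\sfN}^{(d)}(\vect{x})}\le B\,\norm{W_j-\widetilde{W}_j}\prod_{i\ne j}\norm{W_i}=B\prod_{i=1}^d\norm{W_i}\cdot\frac{s_2(W_j)}{s_1(W_j)},
\]
so the whole problem reduces to exhibiting a layer with small spectral ratio $\rho_j:=s_2(W_j)/s_1(W_j)$.

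For the second ingredient, I would bound $\rho_j$ through the Schatten norm. Since $\norm{W_j}_p^p=\sum_i s_i(W_j)^p\ge s_1(W_j)^p+s_2(W_j)^p$, we get $\norm{W_j}_p/\norm{W_j}\ge(1+\rho_j^p)^{1/p}$. Taking the product over $j\in[d]$ and using the hypotheses $\prod_j\norm{W_j}_p\le M$ and $\prod_j\norm{W_j}\ge\Gamma$ gives $\prod_{j=1}^d(1+\rho_j^p)^{1/p}\le M/\Gamma$, i.e. $\sum_{j=1}^d\log(1+\rho_j^p)\le p\log(M/\Gamma)$. Restricting the sum to the first $r$ layers and averaging, some index $j^\star\in[r]$ satisfies $\log(1+\rho_{j^\star}^p)\le p\log(M/\Gamma)/r$; since $\rho_{j^\star}\le 1$, the elementary inequality $\log(1+x)\ge x\log 2$ on $[0,1]$ upgrades this to $\rho_{j^\star}^p\le p\log(M/\Gamma)/(r\log 2)\le 2p\log(M/\Gamma)/r$. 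Choosing $j^\star$ as the modified layer and substituting $\rho_{j^\star}\le(2p\log(M/\Gamma)/r)^{1/p}$ into the display above delivers the claimed bound.

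I expect the substantive content to lie almost entirely in the perturbation-propagation step rather than in the pigeonhole: the telescoping must be carried out so that the \emph{operator} norm (not the Frobenius norm) appears at every unmodified layer, which is precisely what keeps $\prod_i\norm{W_i}$ — and nothing larger — in front, and one must verify that the only activation property used is norm-nonexpansiveness, so that no spurious depth-dependent constants creep in. The pigeonhole step is then routine, its single delicate point being the constant in $\log(1+x)\ge x\log 2$. I would also flag that the mathematically clean statement is that \emph{some} layer $j^\star\in[r]$ (the minimizer of $\rho_j$) is replaced; identifying this index with ``the $r$-th layer'' in the statement is a harmless relabeling.
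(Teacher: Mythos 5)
Your proof is correct and follows essentially the same skeleton as the paper's own treatment: the paper never actually proves Theorem~\ref{thm:alternative-net_3.1} (it is imported from \cite{golowich2018size}), but its $(q,p)$-norm analogue, Theorem~\ref{thm:alternative-net}, is assembled from precisely your three ingredients --- Lipschitz perturbation propagation (Lemma~\ref{lem:alt-net-lem-1}), a rank-one truncation bound (Lemma~\ref{lem:alt-net-lem-2}, with the largest-$L_q$-row truncation playing the role of your SVD truncation $\widetilde{W}_r=s\vect{u}\vect{v}^\transpose$ with $\norm{W_r-\widetilde{W}_r}=s_2(W_r)$), and the layer-wise pigeonhole (Lemma~\ref{lem:alt-net-lem-3}). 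Two minor genuine differences, both in your favor: your endgame via $\log(1+x)\geq x\log 2$ on $[0,1]$ (legitimate since $\rho_{j^\star}\leq 1$) is unconditional, whereas the paper's corresponding step $\bigl(\exp(p\log(M/\Gamma)/r)-1\bigr)^{1/p}\leq\bigl(2p\log(M/\Gamma)/r\bigr)^{1/p}$ is delegated to \cite{golowich2018size} and silently hides a case distinction (the inequality $e^z-1\leq 2z$ fails for large $z$); and you correctly flag that the replaced layer is some $j^\star\in[r]$ minimizing $\rho_j$ rather than literally the $r$-th layer, an imprecision that the statement above and the paper's own Theorem~\ref{thm:alternative-net} both share.
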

\begin{theorem}[\cite{golowich2018size}]\label{thm:composition-net_3.2}
Let $\Hil: \R^n \to [-R, R] $, $\huaF_{L,a}: [-R, R] \to \R $ are $L$-Lipschitz functions and $f(0) = a$ for some fixed $a$. Denote $\huaF_{L,a} \circ \Hil :=\{f(h(\cdot)):f\in\huaF_{L,a},h\in\Hil\}$. The Rademacher complexity satisfies
  \begin{align*}
  	\hat{\huaR}_m\bk{ \huaF_{L,a} \circ \Hil } & \leq cL\bk{ \frac{R}{\sqrt{m}} + \log^{3/2}(m)\hat{\huaR}_m(\Hil) },
  \end{align*}
  where $c>0$ is some universal constant.
\end{theorem}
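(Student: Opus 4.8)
The plan is to run a chaining (Dudley entropy integral) argument after first stripping off the additive constant. Since every $f\in\huaF_{L,a}$ satisfies $f(0)=a$, I would write $f=a+g$ with $g(0)=0$ and $g$ still $L$-Lipschitz. The constant $a$ contributes only $\frac{a}{m}\sum_i\eps_i$ to the empirical process, which is independent of the choice of $f$ and $h$ and has zero expectation, so it drops out and we may assume $f(0)=0$ throughout. After this reduction each $f$ maps $[-R,R]$ into $[-LR,LR]$, the whole composed class is bounded, and its empirical $L_2$ diameter is $\bigO(LR)$.

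The core step is a covering-number decomposition in the empirical $L_2$ metric. For $f_1,f_2\in\huaF_{L,a}$ and $h_1,h_2\in\Hil$, the triangle inequality together with the Lipschitz property gives $\norm*{f_1\circ h_1 - f_2\circ h_2}_{L_2} \leq L\norm*{h_1-h_2}_{L_2} + \norm*{f_1-f_2}_{\infty}$, so a $\delta$-cover of the composed class can be assembled from an $L_\infty$-cover of $\huaF_{L,a}$ at scale $\delta/2$ and an $L_2$-cover of $\Hil$ at scale $\delta/(2L)$. This yields $\log N\bk{\huaF_{L,a}\circ\Hil,\delta}\lesssim \frac{LR}{\delta}+\log N\bk{\Hil,\tfrac{\delta}{2L}}$, where the first term is the standard metric entropy of $L$-Lipschitz functions bounded by $LR$ on an interval of length $2R$. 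Feeding this first term into the Dudley integral $\frac{1}{\sqrt m}\int_0^{cLR}\sqrt{\log N(\cdot,\delta)}\,d\delta$ produces exactly the $\frac{LR}{\sqrt m}$ contribution, since $\int\sqrt{LR/\delta}\,d\delta\sim LR$.

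The hard part is the remaining term, which after the change of variables $u=\delta/(2L)$ is proportional to $\frac{L}{\sqrt m}\int_0^{cR}\sqrt{\log N(\Hil,u)}\,du$. This is precisely the Dudley \emph{upper} bound for $\hat{\huaR}_m(\Hil)$, so it cannot simply be replaced by $\hat{\huaR}_m(\Hil)$ — the naive estimate runs in the wrong direction. Closing the loop requires a reverse passage from the metric entropy of $\Hil$ back to its Rademacher complexity, and this is where I expect the genuine difficulty and where the logarithmic factors originate. Concretely, I would invoke Sudakov minoration, $\sup_u u\sqrt{\log N(\Hil,u)}\lesssim\sqrt m\,\hat G_m(\Hil)$, to control the entropy by the empirical Gaussian complexity $\hat G_m$, then compare Gaussian and Rademacher complexity at the cost of a $\sqrt{\log m}$ factor, obtaining $\sqrt{\log N(\Hil,u)}\lesssim \frac{\sqrt{m}\sqrt{\log m}}{u}\hat{\huaR}_m(\Hil)$.

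Substituting this and truncating the chaining at the scale $u\sim R/\sqrt m$ (below which the process is controlled directly, folding into the $R/\sqrt m$ term) turns the integral into $L\,\sqrt{\log m}\,\hat{\huaR}_m(\Hil)\int_{R/\sqrt m}^{cR}\frac{du}{u}$, and the remaining $\int du/u\sim\log m$ supplies the second logarithmic factor. Thus the $\log^{3/2}(m)=\log m\cdot\sqrt{\log m}$ in the statement decomposes cleanly into one factor from the chaining range and one from the Gaussian-to-Rademacher comparison, and assembling these pieces plus the truncation bookkeeping is the main obstacle, whereas the Lipschitz-entropy term and the constant-removal step are routine. An alternative I would keep in reserve is to cover $\huaF_{L,a}$ in sup-norm, apply Talagrand's contraction lemma inside each cell to get $L\,\hat{\huaR}_m(\Hil)$ for a fixed representative, and recombine across the cover by a Massart-type bound; but since $h$ still varies within each cell, synthesizing the finite cover over $f$ with the variation over $h$ is delicate and, done crudely, loses more than the claimed logarithmic factors.
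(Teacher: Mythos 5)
Your proposal is correct and follows essentially the same route as the proof this paper relies on: the paper omits the argument and defers to Theorem 4 of \cite{golowich2018size}, whose proof is exactly your chain --- strip the constant $a$, cover the Lipschitz class $\huaF_{L,a}$ in sup-norm and $\Hil$ in empirical $L_2$, run a truncated Dudley entropy integral, and convert the entropy of $\Hil$ back into $\hat{\huaR}_m(\Hil)$ via Sudakov minoration together with the $\sqrt{\log m}$ Gaussian-to-Rademacher comparison. Your accounting of the $\log^{3/2}(m)$ factor (one $\log m$ from the chaining range $\int \mathrm{d}u/u$ down to scale $\sim R/\sqrt{m}$, one $\sqrt{\log m}$ from the comparison) matches the original argument.
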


With Theorem~\ref{thm:alternative-net_3.1}, we can substitute the original network to a $r$-layer net followed by a univariate net. And Theorem~\ref{thm:composition-net_3.2} shows that we achieve a bound on the sample complexity of neural networks by complexity of the first $r$ layers net. Note that after Theorem~\ref{thm:composition-net_3.2} the complexity is independent of the depth $d$ of the original network. Combining these two we arrive at the depth-independent bound.
\begin{theorem}[\cite{golowich2018size}]\label{thm:golowich_depth_ind_bound}
Consider the following hypothesis class of networks on $\mathcal{X}=\{\vect{x}:\norm{\vect{x}}\leq B\}$:
$$
\mathcal{H}=\left\{\sfN_{W_1^d}:\begin{array}{c}
\prod_{j=1}^{d}\left\|W_{j}\right\| \geq \Gamma \\
\forall j \in\{1, \ldots, d\}, W_{j} \in \mathcal{W}_{j}, \max \left\{\frac{\left\|W_{j}\right\|}{M(j)}, \frac{\left\|W_{j}\right\|_{p}}{M_{p}(j)}\right\} \leq 1
\end{array}\right\},
$$
for some parameters $p,\Gamma\geq 1,\{M(j),M_p(j),\mathcal{W}_{j}\}_{j=1}^d$. Also for any $r\in\{1,\dots,d\}$, define
$$
\mathcal{H}_r=\left\{\sfN_{W_1^r}:\begin{array}{c}
\sfN_{W_1^r} \text{ maps to } \mathbb{R} \\
\forall j\in\{1,\ldots,r-1\}, W_j\in\mathcal{W}_j \\
\forall j \in\{1,\ldots,r\}, \max \left\{\frac{\left\|W_{j}\right\|}{M(j)}, \frac{\left\|W_{j}\right\|_{p}}{M_{p}(j)}\right\} \leq 1
\end{array}\right\}.
$$
Finally, for $m>1$, let $\ell\circ\mathcal{H}:=\{(\ell_1(h(\vect{x}_1)),\ldots,\ell_m(h(\vect{x}_m))):h\in\mathcal{H}\}$, where $\ell_1,\ldots,\ell_m$ are real-valued loss functions which are $\frac{1}{\gamma}$-Lipschitz and satisfy $\ell_1(\mathbf{0})=\cdots=\ell_m(\mathbf{0})=a$ for some $a\in\mathbb{R}$. Assume that $|a|\leq\frac{B\Pi_{j=1}^dM(j)}{\gamma}$. Then the Rademacher complexity $\hat{\huaR}_m\bk{\ell\circ \Hil}$ is upper bounded by
\begin{align*}
\frac{c B \prod_{j=1}^{d} M(j)}{\gamma}\left(\min _{r \in\{1, \ldots, d\}}\left\{\frac{\log ^{3 / 2}(m)}{B} \cdot \max _{r^{\prime} \in\{1, \ldots, r\}} \frac{\hat{\mathcal{R}}_{m}\left(\mathcal{H}_{r^{\prime}}\right)}{\prod_{j=1}^{r^{\prime}} M(j)}+\left(\frac{\log \left(\frac{1}{\Gamma} \prod_{j=1}^{d} M_{p}(j)\right)}{r}\right)^{1 / p}+\frac{1+\sqrt{\log r}}{\sqrt{m}}\right\}\right),
\end{align*}
where $c>0$ is a universal constant.
\end{theorem}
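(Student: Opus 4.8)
\emph{Proof sketch.} The plan is to combine the single-layer rank-one reduction of Theorem~\ref{thm:alternative-net_3.1} with the composition bound of Theorem~\ref{thm:composition-net_3.2}, carrying both out at one freely chosen layer index $r$ and optimizing over $r$ at the end. Fix an arbitrary $r\in\{1,\dots,d\}$ and note that every $\sfN_{W_1^d}\in\Hil$ satisfies $\prod_j\norm{W_j}_p\le\prod_jM_p(j)=:M$ and $\prod_j\norm{W_j}\ge\Gamma$, so Theorem~\ref{thm:alternative-net_3.1} applies. First I would collapse layer $r$: for each network it produces an alternative net $\widetilde{\sfN}^{(d)}$, identical except that $\widetilde W_r=s\vect u\vect v^{\transpose}$ is rank one, with $\sup_{\vect x\in\mathcal{X}}\norm{\sfN^{(d)}(\vect x)-\widetilde{\sfN}^{(d)}(\vect x)}\le B\prod_jM(j)\bk{2p\log(\tfrac1\Gamma\prod_jM_p(j))/r}^{1/p}$. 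Since each $\ell_i$ is $\tfrac1\gamma$-Lipschitz and the Rademacher complexity of a class is increased by at most the sup-distance to a uniformly close class, replacing $\Hil$ by this collapsed class $\widetilde\Hil$ costs at most $\tfrac1\gamma$ times the above error -- exactly the middle term once scaled by $\tfrac{cB\prod_jM(j)}\gamma$ (the factor $2p$ absorbed into $c$).

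Next I would exploit the rank-one structure. Because $\widetilde W_r=s\vect u\vect v^{\transpose}$, the signal leaving layer $r$ depends on the input $\vect z_r$ to that layer only through the scalar $t=\vect v^{\transpose}\vect z_r$, so I can write $\widetilde{\sfN}^{(d)}=\psi\circ g$ where $g(\vect x)=\vect v^{\transpose}\sigma_{r-1}(W_{r-1}\cdots\sigma_1(W_1\vect x))$ is real-valued and $\psi:\R\to\R$ is the univariate map computed by layers $r$ through $d$ along the collapsed path. The cap $\vect v^{\transpose}$ is a unit vector, so $\norm{\vect v^{\transpose}}=\norm{\vect v^{\transpose}}_p=1$ and $g$ lies in $\Hil_r$; moreover I expect $\psi$ to be Lipschitz with constant $L\le s\prod_{j>r}\norm{W_j}\le\prod_{j\ge r}M(j)$ and $g$ to have range $\abs{g(\vect x)}\le R$ with $R$ of order $B\prod_{j<r}M(j)$.

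Then I would apply Theorem~\ref{thm:composition-net_3.2} (in its per-example loss form) with $\huaF_{L,a}=\{\ell_i\circ\psi\}$, which is $\tfrac L\gamma$-Lipschitz and satisfies the $f(0)=a$ condition because homogeneous activations send $0$ to $0$, composed with $\Hil=\Hil_r$. This gives $\hat\huaR_m(\ell\circ\widetilde\Hil)\le c\tfrac L\gamma\bk{R/\sqrt m+\log^{3/2}(m)\hat\huaR_m(\Hil_r)}$. Writing $L$ as a product of upper-layer spectral norms turns the $\log^{3/2}(m)\hat\huaR_m(\Hil_r)$ piece into $\tfrac{\prod_jM(j)}\gamma\log^{3/2}(m)\tfrac{\hat\huaR_m(\Hil_r)}{\prod_{j<r}M(j)}$, i.e. the first term, while $\tfrac L\gamma\cdot\tfrac R{\sqrt m}\le\tfrac{B\prod_jM(j)}{\gamma\sqrt m}$ supplies the leading part of the third term. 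Adding the Step-1 error and minimizing over $r$ then yields the stated bound.

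The hard part will be the second step: pinning down the Lipschitz constant $L$ of the tail $\psi$ and the range $R$ of the prefix $g$. For the activation $\sigma(x)=x^k$ this is delicate precisely because $\sigma$ is \emph{not} globally Lipschitz, so $\psi$ is only Lipschitz on the bounded region actually reached by the intermediate activations, and one has to propagate explicit layer-wise range bounds and verify that they close up. This same accounting is what upgrades the naive prefix term $\hat\huaR_m(\Hil_r)$ to the uniform control $\max_{r'\le r}\hat\huaR_m(\Hil_{r'})/\prod_{j\le r'}M(j)$ and, through the covering/chaining estimate used for the range contribution, produces the $\sqrt{\log r}$ in the third term. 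Tracking exactly where each factor $M(r)$ and $M_p(r)$ lands so as to recover the normalization $\tfrac{cB\prod_jM(j)}\gamma$ is the remaining bookkeeping.
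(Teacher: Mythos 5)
Your overall route---collapse one layer to rank one via Theorem~\ref{thm:alternative-net_3.1}, absorb the resulting sup-norm approximation error through the $\frac{1}{\gamma}$-Lipschitz losses, split the collapsed net as a univariate Lipschitz tail $\psi$ composed with a real-valued prefix in $\Hil_r$, apply Theorem~\ref{thm:composition-net_3.2}, and minimize over $r$---is exactly the combination the paper indicates for this theorem (the paper offers no further detail, deferring to \cite{golowich2018size}), and the skeleton is sound. However, the place you flag as ``the hard part'' is misdiagnosed in two ways. First, the worry about $\sigma(x)=x^k$ failing to be globally Lipschitz is out of scope here: Theorem~\ref{thm:golowich_depth_ind_bound} is the \emph{quoted} result of \cite{golowich2018size}, whose activations are $1$-Lipschitz and positive-homogeneous (the paper says explicitly that those assumptions exclude PNNs, which is precisely why Section~4 redoes the analysis for $x^k$). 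Under those assumptions $\psi$ is globally Lipschitz with constant at most $\prod_{j>r}M(j)$ and the prefix range bound $R\leq B\prod_{j\leq r}M(j)$ is immediate; no layer-wise range propagation is needed. Second, the $\max_{r'\in\{1,\ldots,r\}}$ and the $\sqrt{\log r}$ do not come from range accounting or chaining. In the actual argument, the min-ratio lemma underlying the collapse (cf.\ Lemma~\ref{lem:alt-net-lem-3}) only guarantees that \emph{some} layer $r'\leq r$, depending on the particular network, has a small ratio $\norm{W_{r'}}_p/\norm{W_{r'}}$, so the rank-one replacement occurs at an unknown index $r'\leq r$. One must therefore take a union over the $r$ possible prefix classes $\Hil_{r'}$, and a maximal inequality over this finite union is what produces both the $\max_{r'\leq r}$ and the extra $\sqrt{\log r}/\sqrt{m}$. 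If you instead take Theorem~\ref{thm:alternative-net_3.1} fully at face value (collapse at exactly the layer $r$ of your choosing), your plan does close and even yields a formally stronger bound, since the max and $\sqrt{\log r}$ only add slack---but that face-value statement is stronger than what the underlying lemmas actually deliver, and the union-over-$r'$ idea is the missing ingredient your sketch would need to be faithful to the machinery.

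One further normalization point in your decomposition: you should place the singular value $s$ with the prefix rather than the tail, i.e.\ take $g(\vect{x})=s\,\vect{v}^{\transpose}\sigma_{r-1}(W_{r-1}\cdots\sigma_1(W_1\vect{x}))$, whose top row $s\vect{v}^{\transpose}$ satisfies $\norm{s\vect{v}^{\transpose}}=s\leq M(r)$ and $\norm{s\vect{v}^{\transpose}}_p=s\leq\norm{W_r}_p\leq M_p(r)$, so $g\in\Hil_r$ unconditionally. With your unit-cap split, $g\in\Hil_r$ requires $M(r)\geq 1$ and $M_p(r)\geq 1$, which is not assumed, and your first term comes out with $\prod_{j<r}M(j)$ instead of $\prod_{j\leq r}M(j)$ in the denominator---a factor $M(r)$ worse than the stated bound. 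These are repairable, but they are exactly the bookkeeping you deferred, so they are worth fixing explicitly.
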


It is worth pointing out that the results show in~\cite{golowich2018size} are not directly applicable to PNNs, due to their assumptions on activations. Therefore, we extend this line of analysis to PNNs in this paper.

\section{Improved Sample Complexity for Polynomial Neural Networks}
Both depth-dependent and depth-independent results are provided in this section. Some proofs are relegated to a longer version of this paper for brevity.
\subsection{Depth-Dependent Sample Complexity for Polynomial Neural Networks}
\label{sec:dependent}
%\subsection{Modified version of lemma 1}
We will need the following Lemma~\ref{modified:lemma1} and Lemma~\ref{modified:lemma2} to get the bound for polynomial networks with dependence on only $\sqrt{d}$.
\begin{lemma}
\label{modified:lemma1}
Let activations be $\sigma(x)=x^k(k \geq 2)$ which is applied element-wise. Then for $\vect{x}\in\mathcal{X}\subset\mathbb{R}^n$, function class $\mathcal{F}:\forall f\in\mathcal{F},\|f(\vect{x})\|_{p}\leq B, \|W\|_{q,p}\leq R$, where $\frac{1}{q}+\frac{1}{p}=1$, and any convex and monotonically increasing function $g:\mathbb{R}\rightarrow[0,\infty)$, we can have that with condition $BR \leq \bk{\frac{1}{k}}^{\frac{1}{k-1}}$,
\begin{align}
  & \E\sup_{f\in\mathcal{F};\|W\|_{q,p}\leq R}g\bk{\norm*{\sum_{i=1}^m\eps_i\sigma(Wf(\mathbf{x}_i))}_p} 
    \leq 2 \E\sup_{f\in\mathcal{F}}g\bk{R\cdot\norm*{\sum_{i=1}^m\eps_if(\vect{x}_i)}_{p}}.
\end{align}
\end{lemma}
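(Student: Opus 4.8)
The plan is to peel off a single layer---both the weight matrix $W$ and the activation $\sigma$---at the cost of a factor $2$, adapting the strategy of~\cite{golowich2018size} to the degree-$k$ homogeneous activation. Since $g$ is monotonically increasing it commutes with suprema, so the key preliminary step is to evaluate $\sup_{\norm*{W}_{q,p}\le R}\norm*{\sum_i \eps_i \sigma(Wf(\vect{x}_i))}_p$ for fixed $\eps$ and $f$. Writing $\vect{w}_j$ for the $j$-th row of $W$, the $j$-th coordinate of the inner vector is $\sum_i \eps_i \ip{\vect{w}_j}{f(\vect{x}_i)}^k$, and by $k$-homogeneity the substitution $\vect{w}_j=\rho_j\vect{u}_j$ with $\norm*{\vect{u}_j}_q=1$ factors out $\rho_j^{k}$. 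The maximization of $\sum_j \rho_j^{kp}\,|\tilde v_j|^p$, where $\tilde v_j=\sum_i\eps_i\ip{\vect{u}_j}{f(\vect{x}_i)}^k$, subject to $\sum_j\rho_j^p\le R^p$, becomes after the substitution $a_j=\rho_j^p$ the maximization of the convex function $\sum_j a_j^{k}|\tilde v_j|^p$ over a simplex (here $k\ge 2$ is essential), so the optimum is attained at a single vertex. This collapses $W$ to one row and yields
\[
\sup_{\norm*{W}_{q,p}\le R}\norm*{\sum_i \eps_i \sigma(Wf(\vect{x}_i))}_p \;=\; \sup_{\norm*{\vect{w}}_q\le R}\abs*{\sum_i \eps_i \ip{\vect{w}}{f(\vect{x}_i)}^k}.
\]

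With this reduction the left-hand side becomes $\E\sup_{f,\,\norm*{\vect{w}}_q\le R} g\bigl(\abs*{\sum_i\eps_i\ip{\vect{w}}{f(\vect{x}_i)}^k}\bigr)$. First I would remove the absolute value: since $g\ge 0$ is increasing, $g(|z|)\le g(z)+g(-z)$, and after taking the supremum and then $\E$, the sign flip $\eps\mapsto-\eps$ (which preserves the Rademacher law) shows the two resulting terms are equal, producing the factor $2$ and leaving $2\,\E\sup_{f,\vect{w}} g\bigl(\sum_i \eps_i \ip{\vect{w}}{f(\vect{x}_i)}^k\bigr)$. Next I would apply the contraction principle in its convex--increasing-$g$ form to the scalar map $\phi(t)=t^k$ composed with the functionals $t_i=\ip{\vect{w}}{f(\vect{x}_i)}$. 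The crucial observation is that Hölder's inequality with $\norm*{f(\vect{x}_i)}_p\le B$ and $\norm*{\vect{w}}_q\le R$ forces every $t_i\in[-BR,BR]$, on which $\phi$ has Lipschitz constant $k(BR)^{k-1}$; the hypothesis $BR\le(1/k)^{1/(k-1)}$ is exactly what guarantees $k(BR)^{k-1}\le 1$, so that $\phi$ (after a $1$-Lipschitz extension outside the interval, which leaves its values on the data unchanged) is a genuine contraction with $\phi(0)=0$. Contraction then strips $\phi$, and the dual-norm identity $\sup_{\norm*{\vect{w}}_q\le R}\ip{\vect{w}}{\sum_i\eps_i f(\vect{x}_i)}=R\norm*{\sum_i\eps_if(\vect{x}_i)}_p$ (pulled outside $g$ by monotonicity) delivers the claimed bound.

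The step I expect to be the main obstacle is the matrix-to-vector reduction, since it is where the degree-$k$ nonlinearity interacts with the $(q,p)$-norm constraint: one must justify that the convex objective over the row-norm budget is maximized at a single active row, and track the exponents so that the radius $R$---rather than $R^{k}$---survives into the final bound. Keeping $\vect{w}$ at radius $R$ (instead of factoring out $R^{k}$) is precisely what confines the pre-activations to $[-BR,BR]$ and lets the single power of $R$ re-emerge from duality \emph{after} the contraction. The second delicate point is invoking the contraction principle legitimately: verifying $\phi(0)=0$, that $BR\le(1/k)^{1/(k-1)}$ yields Lipschitz constant at most $1$ on the exact range of the pre-activations, and that it is the convex-$g$ version of contraction---rather than the bare Rademacher form---that is used so it meshes with the $g$ wrapper carried throughout.
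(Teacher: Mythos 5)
Your proposal is correct and takes essentially the same route as the paper's proof: the homogeneity-based collapse of $W$ to a single row of $L_q$-norm $R$, the $g(\abs{z})\leq g(z)+g(-z)$ symmetrization with the Rademacher sign-flip for the factor $2$, and the convex-increasing-$g$ contraction principle of Ledoux--Talagrand applied to $t\mapsto t^k$ with Lipschitz constant $k(BR)^{k-1}\leq 1$ on $[-BR,BR]$, followed by $L_q/L_p$ duality. Your two refinements---justifying the row collapse by convexity of $a\mapsto a^k$ over the simplex $\sum_j a_j\leq R^p$ (where the paper instead relaxes to the $\norm{W}_{q,kp}\leq R$ constraint and picks a vertex), and explicitly extending $t^k$ to a $1$-Lipschitz function with $\phi(0)=0$ outside the range of the pre-activations---are minor variants that tighten details the paper's argument leaves implicit.
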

\begin{proof}
Denote $\vect{w}_1,\vect{w}_2,\dots,\vect{w}_h$ as the rows of the matrix $W\in\mathbb{R}^{h\times n}$. Since the activation is $\sigma(x)=x^k$, we have for any $\vect{w}_j$, 
\[
	\eps_i\sigma(\vect{w}_j^\mathrm{T}f(\vect{x}_i)) = \eps_i\norm*{\vect{w}_j}_q^k\sigma\bk{\frac{\vect{w}_j^\mathrm{T}}{\norm*{\mathbf{w}_j}_q}f(\vect{x}_i)}.
\]
Then for the whole matrix $W$ we have 
\begin{equation*}
	\norm*{\sum_{i=1}^m\eps_i\sigma(Wf(\vect{x}_i))}_p^p=\sum_{j=1}^h\norm*{\vect{w}_j}_q^{kp}\bk{\sum_{i=1}^m\eps_i\sigma\bk{\frac{\vect{w}_j^\mathrm{T}}{\norm*{\mathbf{w}_j}_q}f(\vect{x}_i)}}^p
\end{equation*}
According to definition of $\norm*{W}_{q,kp}$, we have 
\[
	\norm*{W}_{q,kp} = \bk{\sum_{j}\bk{\sum_k |W_{j,k}|^q}^{\frac{kp}{q}}}^{\frac{1}{kp}} = \bk{\sum_{j}\norm*{\vect{w}_j}_q^{kp}}^{\frac{1}{kp}} \leq \bk{\sum_{j}\norm*{\vect{w}_j}_q^{p}}^{\frac{1}{p}} = \norm*{W}_{q,p} \leq R,
\]
here we use the fact that $\|\vect{w}\|_p \leq \|\vect{w}\|_q$ if $p\geq q$. So the supremum over $\vect{w}_1,\vect{w}_2,\dots,\vect{w}_h$ such that $\norm*{W}_{q,kp}^{kp}=\sum_j\norm*{\vect{w}_j}_q^{kp}\leq \norm*{W}_{q,p}^{kp} \leq R^{kp}$ must be attained when $\norm*{\vect{w}_j}_q=R$ for some $j$, and $\norm*{\vect{w}_i}_q=0$ for all $i\neq j$, as we can consider $\frac{\vect{w}_j^\mathrm{T}}{\norm*{\vect{w}_j}_q}$ as a unit vector in the sense of $L_q$ norm. Therefore, we have 
\[
	\E\sup_{f\in\mathcal{F},\norm*{W}_{q,p}\leq R}g\bk{\norm*{\sum_{i=1}^m \eps_i\sigma\bk{Wf(\vect{x}_i)}}_p} = \E\sup_{f\in\mathcal{F},\|\vect{w}\|_q=R}g\bk{\abs*{\sum_{i=1}^m \eps_i\sigma\bk{\vect{w}^\mathrm{T}f(\vect{x}_i)}}}.
\]
Since $g(|z|)\leq g(z)+g(-z)$, the above quantity can be upper bounded by
\[
2\E\sup_{f\in\mathcal{F},\|\vect{w}\|_q=R}g\bk{\sum_{i=1}^m \eps_i\sigma\bk{\vect{w}^\mathrm{T}f(\vect{x}_i)}},
\]
where we use the symmetry property of Rademacher variables $\eps_i$. By Eq. (4.20) in~\cite{ledoux1991probability}, we can further bound this term by
\begin{align*}
2\E\sup_{f\in\mathcal{F},\|\vect{w}\|_q=R}g\bk{\sum_{i=1}^m \eps_i\sigma\bk{\vect{w}^\mathrm{T}f(\vect{x}_i)}} &\leq 2\E\sup_{f\in\mathcal{F},\|\vect{w}\|_q=R}g\bk{L\norm*{\vect{w}}_q\norm*{\sum_{i=1}^m \eps_if(\vect{x}_i)}_p} \\
&=
2\E\sup_{f\in\mathcal{F}}g\bk{LR\norm*{\sum_{i=1}^m \eps_if(\vect{x}_i)}_p},
\end{align*}
where $L$ represents the Lipschitz constant for $\sigma\bk{\cdot}$. Since $\sigma(x)$ is continuous, it is equivalent to make sure that $|\sigma'\bk{\vect{w}^\mathrm{T}f(\vect{x}_i)}| \leq L$. Let us consider $L=1$ in the rest of this paper for simplicity,
\begin{align*}
	kx^{(k-1)} &\leq 1 \\
    x &\leq \bk{\frac{1}{k}}^{\frac{1}{k-1}}.
\end{align*}
So if $\|f(\mathbf{x})\|_p \leq B$ and $\|\vect{w}\|_q \leq R$, we have 
\begin{align}
	\abs*{\vect{w}^\mathrm{T}f(\vect{x}_i)} \leq \|\vect{w}\|_q\cdot \|f(\mathbf{x})\|_p \leq BR\leq \bk{\frac{1}{k}}^{\frac{1}{k-1}}.\label{eq:lemma1_constraint}
\end{align}
\end{proof}
\begin{remark}
Lemma~\ref{modified:lemma1} shows that even if we have a very wide layer (i.e., dimension $h$ is large for $W\in\mathbb{R}^{h\times n}$), the sample complexity analysis is the same as the case where $h=1$. This explains why the complexity can be independent of the width of neural networks. And the constraint in Eq.~\eqref{eq:lemma1_constraint} will asymptotically converge to $1$ when $k$ increases, as shown in Figure~\ref{fig:lemma1_ub}. In supplement we consider one special case where $q=1, p=\infty$.
\end{remark}

\begin{figure}[htb]
    \centering
    \includegraphics[width = 0.7\linewidth]{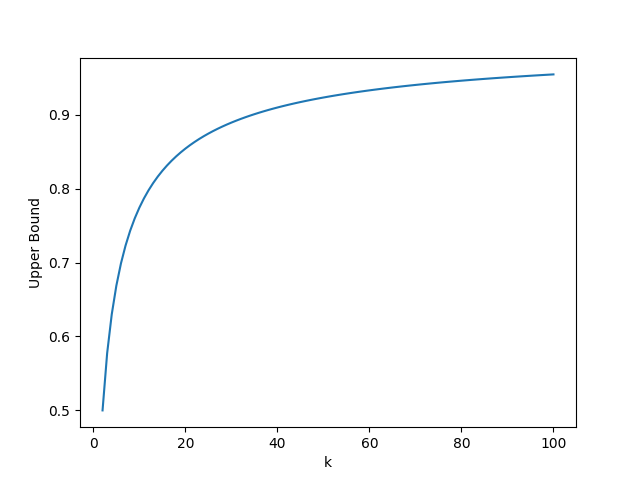}
    \caption{Constraint shown in Eq.~\eqref{eq:lemma1_constraint}.}
    \label{fig:lemma1_ub}
\end{figure}

\begin{lemma}
\label{modified:lemma2}
Let $\mathcal{H}_d$ be the class of real-valued networks of depth $d$ over the domain $\mathcal{X}$, and $\norm{W_i}_{q,p} \leq M(i),\, \forall i \in [d] $, where $\frac{1}{q} + \frac{1}{p} = 1 $, and with activations satisfying Lemma~\ref{modified:lemma1} for each layer. Then,
\begin{align*}
 \hat{\huaR}_m \bk{\Hil_d} \leq \frac{ \prod_{i=1}^d M(i)}{m} \bk{ \sqrt{ 2d\log2}\sqrt{\sum_{i=1}^m \norm{\vect{x}_i}_{p}^2} +  \bk{ \sum_{i=1}^m \norm*{\vect{x}_i}_p^p}^{\frac{1}{p}}}.
\end{align*}
\end{lemma}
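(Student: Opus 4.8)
The plan is to follow the exponential-moment (``softmax'') method together with a layer-by-layer peeling driven by Lemma~\ref{modified:lemma1}. Writing $m\hat{\huaR}_m(\Hil_d)=\E\sup_{h\in\Hil_d}\sum_{i=1}^m\eps_i h(\vect{x}_i)$, I would first pass to a moment generating function: for every $\lambda>0$, Jensen's inequality applied to the concave map $\log$ gives
\[
\E\sup_{h}\sum_{i}\eps_i h(\vect{x}_i)\;\le\;\frac{1}{\lambda}\log\E\sup_{h}\exp\bk{\lambda\sum_i\eps_i h(\vect{x}_i)}.
\]
The purpose of this step is that $g(z)=\exp(\lambda z)$ is convex and monotonically increasing, exactly the hypothesis required by Lemma~\ref{modified:lemma1}, so the right-hand side is now in a form where layers can be stripped one at a time.

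Next I would peel the layers from the top. Since $\Hil_d$ maps to $\R$, the outermost weight $W_d$ is a single row $\vect{w}_d^{\transpose}$ with $\norm*{\vect{w}_d}_q=\norm*{W_d}_{q,p}\le M(d)$; writing $h(\vect{x})=\vect{w}_d^{\transpose}f^{(d-1)}(\vect{x})$ for the post-activation output $f^{(d-1)}$ of the first $d-1$ layers and applying H\"older's inequality ($\tfrac1q+\tfrac1p=1$) inside the exponent yields
\[
\E\sup_h\exp\bk{\lambda\sum_i\eps_i h(\vect{x}_i)}\le\E\sup\exp\bk{\lambda M(d)\norm*{\textstyle\sum_i\eps_i f^{(d-1)}(\vect{x}_i)}_p}.
\]
Now $f^{(d-1)}=\sigma(W_{d-1}f^{(d-2)})$ has precisely the shape $\sigma(Wf(\cdot))$, so I would invoke Lemma~\ref{modified:lemma1} with $g(z)=\exp(\lambda M(d)z)$ and $R=M(d-1)$, peeling off $W_{d-1}$ at the cost of a factor $2$ and replacing $\sigma(W_{d-1}f^{(d-2)})$ by $M(d-1)f^{(d-2)}$ inside the norm. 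Iterating this $d-1$ times (the accumulated prefactor stays convex and increasing at every stage) strips all activation layers and leaves the raw inputs:
\[
\E\sup_h\exp\bk{\lambda\sum_i\eps_i h(\vect{x}_i)}\le 2^{\,d-1}\,\E\exp\bk{\lambda\,A\,\norm*{\textstyle\sum_i\eps_i\vect{x}_i}_p},\qquad A:=\prod_{j=1}^d M(j).
\]
A point to verify here is that each application of Lemma~\ref{modified:lemma1} needs its constraint $B\cdot M(j)\le(1/k)^{1/(k-1)}$, which is guaranteed by the standing assumption that the activations satisfy Lemma~\ref{modified:lemma1} at every layer, i.e.\ the layerwise norm bounds propagate so that the intermediate representations stay inside the admissible region.

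The last and most delicate step is to control $\E\exp(\lambda A Z)$ with $Z:=\norm*{\sum_i\eps_i\vect{x}_i}_p$, and then to optimize over $\lambda$. Viewing $Z$ as a function of $(\eps_1,\dots,\eps_m)$, flipping a single $\eps_i$ changes $Z$ by at most $2\norm*{\vect{x}_i}_p$, so the bounded-differences (sub-Gaussian) MGF estimate gives $\E\exp(\lambda A(Z-\E Z))\le\exp\bk{\tfrac{\lambda^2A^2}{2}\sum_i\norm*{\vect{x}_i}_p^2}$, while Jensen together with a moment computation for Rademacher sums bounds the mean by $\E Z\le(\sum_i\norm*{\vect{x}_i}_p^p)^{1/p}$. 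Collecting the three contributions, taking $\log$ and dividing by $\lambda$,
\[
m\hat{\huaR}_m(\Hil_d)\le\frac{(d-1)\log2}{\lambda}+A\bk{\textstyle\sum_i\norm*{\vect{x}_i}_p^p}^{1/p}+\frac{\lambda A^2}{2}\sum_i\norm*{\vect{x}_i}_p^2 .
\]
Choosing $\lambda$ to balance the first and third terms (an AM--GM optimization) turns $(d-1)\log2$ into the advertised $\sqrt{2d\log2}$ factor multiplying $\sqrt{\sum_i\norm*{\vect{x}_i}_p^2}$, and dividing by $m$ gives the claim. The main obstacle I expect is exactly this final step: getting the depth to enter only through $\sqrt{d}$ hinges on the exponential-moment linearization surviving all $d-1$ peelings (so that depth appears solely via the benign factor $2^{d-1}=\exp((d-1)\log2)$) and then being absorbed by the right choice of $\lambda$; the mean bound $\E Z\le(\sum_i\norm*{\vect{x}_i}_p^p)^{1/p}$ is clean for the Hilbertian case $p=2$ but requires the Rademacher moment inequality to be handled with care for general $p$.
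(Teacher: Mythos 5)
Your proposal is correct and follows essentially the same route as the paper's own proof: the exponential-moment step via Jensen, iterated peeling through Lemma~\ref{modified:lemma1} with $g(z)=\exp(\lambda M(\cdot)z)$ accumulating the benign $2^{d}$ factor (your $2^{d-1}$, from handling the scalar output row by H\"older alone, is a marginally sharper bookkeeping of the same idea), the bounded-differences sub-Gaussian bound for $Z=\norm*{\sum_i\eps_i\vect{x}_i}_p$, the Jensen/moment bound $\E Z\leq M\bk{\sum_i\norm*{\vect{x}_i}_p^p}^{1/p}$, and the AM--GM choice of $\lambda$ yielding the $\sqrt{2d\log 2}$ factor. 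The caveat you raise about the Rademacher moment estimate for general $p$ also appears in the paper, which explicitly labels that step a loose general bound.
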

% Bound in Theorem \ref{thm:sqrt-d-frob} can be extended to any pairs of dual norms of matrices:
% \[ \hat{\huaR}_m \bk{\Hil_d} \leq \frac{ \prod_{i=1}^d M(i)}{m} \bk{ \sqrt{ 2d\log2}\sqrt{\sum_{i=1}^m \norm{\vect{x}_i}_{p^*}^2} +  \bk{\sum_{j=1}^n \sum_{i=1}^m \abs*{ \vect{x}_i }^{p^*}}^{1/p^*}} \]
% where $\norm{W_i}_{q,p} \leq M(i),\, \forall i \in [d] $, $\frac{1}{q} + \frac{1}{p} = 1 $.
\begin{proof}
  For some $\lambda>0$, we can upper bound the empirical Rademacher complexity by
  \begin{align*}
    m \hat{\huaR}_m \bk{\Hil_d} & = \E \sup_{\sfN^{(d-1)}, W_d} \sum_{i=1}^m \eps_i W_d \sigma_{d-1} \bk{ \sfN^{(d-1)}(\vect{x}_i) } \\
      & \stackrel{(a)}{\leq} \frac{1}{\lambda} \log \E \sup_{\sfN^{(d-1)}, W_d} \exp\bk{ \lambda \sum_{i=1}^m \eps_i W_d \sigma_{d-1} \bk{ \sfN^{(d-1)}(\vect{x}_i) }} \\
      & \stackrel{(b)}{\leq} \frac{1}{\lambda} \log \E\sup_{\sfN^{(d-1)}} \exp\bk{ \lambda M(d) \norm*{\sum_{i=1}^m \eps_i \sigma_{d-1} \bk{ \sfN^{(d-1)}(\vect{x}_i)} }_{p} } \\
    %   & \leq \frac{1}{\lambda} \log 2^d \Ex_{\eps^m} \exp\bk{ \lambda \prod_{i=1}^d M(i) \norm*{\sum_{i=1}^m \eps_i \vect{x}_i }_{p}} \\
      & \stackrel{(c)}{\leq} \frac{1}{\lambda} \log \bk{2^d\cdot \E \exp\bk{ \lambda \prod_{i=1}^d M(i) \norm*{\sum_{i=1}^m \eps_i \vect{x}_i }_{p}}} \\
      & = \frac{1}{\lambda}\log\bk{2^d\cdot\E\exp{\lambda Z}} \\
      & = \frac{d\log 2}{\lambda} + \frac{1}{\lambda}\log \bk{\E \exp\bk{ \lambda \bk{Z - \E Z} } } + \E Z,  \numberthis \label{eq:arbitray-dual-norm}
  \end{align*}
  where $M=\prod_{i=1}^d M(i), Z = M\norm*{\sum_{i=1}^m \eps_i \vect{x}_i }_{p}$. Here, $(a)$ follows from Jensen's inequality, $(b)$ follows from Lemma~\ref{modified:lemma1} with $g(x)=\exp\bk{M(d)\cdot \lambda x}$, and $(c)$ follows from repeating this process $d$ times. 
  
  For the second term $\frac{1}{\lambda}\log \bk{\E \exp\bk{ \lambda \bk{Z - \E Z} } }$, we have that 
  $$
    |Z\left(\eps_{1}, \ldots, \eps_{i}, \ldots, \eps_{m}\right)-Z\left(\eps_{1}, \ldots,-\eps_{i}, \ldots, \eps_{m}\right)| \leq 2 M\left\|\mathbf{x}_{i}\right\|_p.
  $$
  This implies that $Z$ is a sub-Gaussian random variable and we can have the concentration inequality
  \[
    \frac{1}{\lambda}\log \E \exp\bk{ \lambda \bk{Z - \E Z} } \leq 
    \frac{1}{\lambda} \frac{\lambda^2 \frac{1}{4} \sum_{i=1}^m \bk{2M\norm{\vect{x}_i}_{p}}^2 }{2} = \frac{\lambda M^2 \sum_{i=1}^m \norm{\vect{x}_i}_{p}^2}{2}.
  \]
  Also by Jensen's inequality, we have
  \begin{align*}
    \E Z &= \E\bk{M\norm*{\sum_{i=1}^m\eps_i\vect{x}_i}_p}=M\cdot\E\bk{\norm*{\sum_{i=1}^m\eps_i\vect{x}_i}_p^p}^{\frac{1}{p}}\\
    & \stackrel{(a)}{\leq} M\cdot \bk{\E \norm*{\sum_{i=1}^m\eps_i\vect{x}_i}_p^p}^{\frac{1}{p}} = M\cdot\bk{\E \bk{\sum_{j=1}^n \abs*{\sum_{i=1}^m x_{ij}\eps_i}^p}}^{\frac{1}{p}} \\
    & = M\cdot\bk{\sum_{j=1}^n \E \bk{\abs*{\sum_{i=1}^m x_{ij}\eps_i}^p}}^{\frac{1}{p}}
    \stackrel{(b)}{\leq} M\cdot \bk{\sum_{i=1}^m \norm*{\vect{x}_i}_p^p}^{\frac{1}{p}}.
    % \leq M \bk{\sum_{j=1}^n \Ex_{\varepsilon^m} \abs*{\sum_{i=1}^m \eps_i \vect{x}_i }_j^{p}}^{1/p} 
    %   \leq M \bk{\sum_{j=1}^n \sum_{i=1}^m \Ex_{\varepsilon^m} \abs*{ \eps_i \vect{x}_i }_j^{p}}^{1/p} 
    %   = M \bk{\sum_{j=1}^n \sum_{i=1}^m \abs*{ x_{ij} }^{p}}^{1/p} 
  \end{align*}
  Here $(a)$ follows from the fact that $f(x)=x^{1/p}$ is a concave function for $p\geq 1$. Note that $(b)$ is a pretty loose general bound for all $p\geq 1$. A much tighter bound can be obtained if we know the specific choice of $p$ (i.e., $p=2$). 
  
  Combining the results we have that Eq.~\eqref{eq:arbitray-dual-norm} can be further upper bounded as follows
  \begin{align*}
    m \hat{\huaR}_m \bk{\Hil_d} & \leq \frac{d\log 2}{\lambda} + \frac{\lambda M^2 \sum_{i=1}^m \norm{\vect{x}_i}_{p}^2}{2}  + M\cdot \bk{\sum_{i=1}^m \norm*{\vect{x}_i}_p^p}^{\frac{1}{p}}  \\
    & \stackrel{(a)}{\leq} \sqrt{2d\log(2)} M\cdot \sqrt{\sum_{i=1}^m \norm{\vect{x}_i}_{p}^2} + M\cdot \bk{\sum_{i=1}^m \norm*{\vect{x}_i}_p^p}^{\frac{1}{p}},
  \end{align*}
  where we choose $\lambda=\frac{\sqrt{2 d \log (2) }}{M \sqrt{\sum_{i=1}^{m}\left\|\mathbf{x}_{i}\right\|_p^{2}}}$ to optimize the upper bound in $(a)$.
\end{proof}

Combining Lemma~\ref{modified:lemma1} and Lemma~\ref{modified:lemma2}, we can have a depth-dependent upper bound on the sample complexity. But we need to take care of the norm constraints of all $W_i$ to make sure that the condition in Lemma~\ref{modified:lemma1} holds for each layer.
\begin{theorem}
\label{thm:dual_dep}
Let $\mathcal{H}_d$ be the class of real-valued networks of depth $d$ over the domain $\mathcal{X}$. For $j \in \hua{2,\cdots,d}$, $W_j$ has $\norm*{W_j}_{q,p} \leq M(j)$, $M(j) \leq 2^{k-1}$; $\norm{W_1}_{q,p} \leq \frac{1}{2B}$, $\norm{\vect{x}_i}_{p} \leq B$ ; and activation function $\sigma(x)=x^k(k \geq 2)$ which is applied element-wise. Then,
  \begin{align}
    \hat{\huaR}_m \bk{\Hil_d} \leq \frac{ \prod_{i=1}^d M(i)}{m} \bk{ \sqrt{ 2d\log2}\sqrt{\sum_{i=1}^m \norm{\vect{x}_i}_{p}^2} +  \bk{ \sum_{i=1}^m \norm*{\vect{x}_i}_p^p}^{\frac{1}{p}}}.
  \end{align}
\end{theorem}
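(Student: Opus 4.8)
The plan is to reduce the statement to Lemma~\ref{modified:lemma2}, whose conclusion is \emph{literally} the claimed bound; the only thing left to establish is that the hypothesis of Lemma~\ref{modified:lemma2} — that the activation condition of Lemma~\ref{modified:lemma1} holds at every layer — is guaranteed by the explicit norm constraints $\|W_1\|_{q,p}\leq\frac{1}{2B}$, $\|W_j\|_{q,p}\leq M(j)\leq 2^{k-1}$ for $j\geq 2$, and $\|\vect{x}_i\|_p\leq B$. Recall that Lemma~\ref{modified:lemma1}, applied at a layer with input $f$ and weight $W$, requires $\|f(\vect{x})\|_p\cdot\|W\|_{q,p}\leq(1/k)^{1/(k-1)}$. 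Since the top weight $W_d$ carries no subsequent activation and is peeled off by a plain dual-norm bound, the task is to propagate a bound on the $p$-norm through the forward pass and verify this product constraint at each layer $\ell\in\{1,\dots,d-1\}$ where an activation is peeled.

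Two elementary inequalities drive the propagation. First, Hölder's inequality (with $\frac1q+\frac1p=1$) applied row-wise gives $\|W\vect{z}\|_p\leq\|W\|_{q,p}\,\|\vect{z}\|_p$, so a weight matrix scales the $p$-norm by at most $\|W\|_{q,p}$. Second, since $\sigma$ acts element-wise as $z\mapsto z^k$, we have $\|\sigma(\vect{z})\|_p=\left(\sum_j|z_j|^{kp}\right)^{1/p}=\|\vect{z}\|_{kp}^k\leq\|\vect{z}\|_p^k$, where the last step uses the monotonicity $\|\vect{z}\|_{kp}\leq\|\vect{z}\|_p$ (as $kp\geq p$). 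The second inequality is the crucial one: when the pre-activation norm is below $1$, raising to the $k$-th power \emph{shrinks} it, and this shrinkage is exactly what counteracts the growth permitted by $M(j)\leq 2^{k-1}$.

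First I would set up the induction. Let $a_\ell$ denote an upper bound on the $p$-norm of the input fed to $W_\ell$, so that $a_1=B$ and $a_{\ell+1}=(a_\ell M(\ell))^k$ follows from the two inequalities above. At layer $1$, using $M(1)\leq\frac{1}{2B}$, we get $a_1 M(1)\leq B\cdot\frac{1}{2B}=\frac12$, hence $a_2\leq(1/2)^k$. I then claim $a_\ell\leq(1/2)^k$ for all $\ell\geq2$ and prove it by induction: if $a_\ell\leq(1/2)^k$, then using $M(\ell)\leq 2^{k-1}$ the product satisfies $a_\ell M(\ell)\leq(1/2)^k\cdot 2^{k-1}=\frac12$, so $a_{\ell+1}=(a_\ell M(\ell))^k\leq(1/2)^k$ again — the bound is a fixed point of the recursion. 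Consequently the per-layer product $a_\ell M(\ell)\leq\frac12$ for every $\ell\in\{1,\dots,d-1\}$.

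It then remains to check $\frac12\leq(1/k)^{1/(k-1)}$, equivalently $2^{k-1}\geq k$, which holds for all $k\geq 2$ (equality at $k=2$, and thereafter an easy induction gives $2^{k}=2\cdot2^{k-1}\geq2k\geq k+1$). Thus the activation condition of Lemma~\ref{modified:lemma1} is met at every layer, so Lemma~\ref{modified:lemma2} applies verbatim and yields exactly the stated bound. The one step requiring genuine care is the self-consistency of the recursion $a_{\ell+1}=(a_\ell M(\ell))^k$: one must verify that $(1/2)^k$ is an invariant of the forward pass rather than a quantity that drifts with depth, since it is precisely this invariance — the polynomial activation's contraction exactly balancing the allowed weight growth $2^{k-1}$ — that keeps the Lemma~\ref{modified:lemma1} constraint uniform in $\ell$ and hence lets the theorem hold for networks of arbitrary depth $d$.
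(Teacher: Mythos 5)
Your proposal is correct and takes essentially the same route as the paper's own proof: you verify that $(1/2)^k$ is an invariant bound on the $p$-norm of each layer's output, so that the product condition of Lemma~\ref{modified:lemma1}, namely $a_\ell M(\ell)\leq \frac{1}{2}\leq (1/k)^{1/(k-1)}$, holds uniformly in $\ell$, and then apply Lemma~\ref{modified:lemma2} verbatim. The only (cosmetic) difference is that you make the fixed-point induction $a_{\ell+1}=(a_\ell M(\ell))^k\leq (1/2)^k$ and the elementary check $2^{k-1}\geq k$ fully explicit, whereas the paper computes the first-layer bound $B'=(1/2)^k$, notes the constraint $\norm*{W_j}_{q,p}\leq 2^{k-1}$ it forces, asserts the same invariance in one line, and justifies $\frac{1}{2}\leq (1/k)^{1/(k-1)}$ via monotonicity of $(1/k)^{1/(k-1)}$ in $k$.
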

\begin{proof}
%   In this part we uses the results of Lemma [last one only q,p norm]
%   \begin{align*}
%   m \hat{\huaR}_m \bk{\Hil_d} & = \Ex_{\eps^m} \sup_{\sfN^{d-1}, W_d} \sum_{i=1}^m \eps_i W_d \sigma_{d-1} \bk{ \sfN^{d-1}(\vect{x}_i) } \\
%         & \leq \frac{1}{\lambda} \log \Ex_{\eps^m} \sup_{\sfN, W_d} \exp\bk{ \lambda \sum_{i=1}^m \eps_i W_d \sigma_{d-1} \bk{ \sfN^{d-1}(\vect{x}_i) }} \\
%         & \leq \frac{1}{\lambda} \log \Ex_{\eps^m} \sup_{\sfN} \exp\bk{ \lambda M(d) \norm*{\sum_{i=1}^m \eps_i \sigma_{d-1} \bk{ \sfN^{d-1}(\vect{x}_i)} }_{p^*} } \\
%         & \leq \frac{1}{\lambda} \log 2^d \Ex_{\eps^m} \exp\bk{ \lambda \prod_{i=1}^d M(i) \norm*{\sum_{i=1}^m \eps_i \vect{x}_i }_{p^*}} \\
%         & \leq \frac{1}{\lambda} \log 2^d \Ex_{\eps^m} \exp\bk{ \lambda \prod_{i=1}^d M(i) \norm*{\sum_{i=1}^m \eps_i \vect{x}_i }_{p^*}} \\
%         & \leq \frac{d\log 2}{\lambda} + \frac{1}{\lambda}\log \Ex_{\eps^m} \exp\bk{ \lambda \bk{Z - \E Z} } + \E Z  \numberthis \label{eq:arbitray-dual-norm} \\
%   \end{align*}
First for function $f(x)=\bk{\frac{1}{k}}^{\frac{1}{k-1}}$, it is monotonically increasing on $[2,\infty)$. So if $k\geq 2$, the condition in Lemma~\ref{modified:lemma1} becomes $BR\leq \frac{1}{2}$, meaning that $R\leq \frac{1}{2B}$. Let us consider the first layer. Since $\|\mathbf{x}_i\|_p\leq B$, we have for sure that 
\[
	\norm{W_1}_{q,p} \leq \frac{1}{2B}.
\]
Then for second layer, the input becomes $f(\mathbf{x}_i)=\sigma_1(W_1\mathbf{x}_i)$. By definition we have 
\begin{align*}
	\norm*{f(\mathbf{x}_i)}_p^p &= \norm*{\sigma_1(W_1\mathbf{x}_i)}_p^p
    = \sum_{j=1}^h\bk{\sigma(\mathbf{w}_j^\mathrm{T}\mathbf{x}_i)}^p
    = \sum_{j=1}^h\bk{\mathbf{w}_j^\mathrm{T}\mathbf{x}_i}^{kp} \\
    &\stackrel{(a)}{\leq} \sum_{j=1}^h\bk{\norm*{\mathbf{w}_j}_q\norm*{\mathbf{x}_i}_p}^{kp}\\
    &\stackrel{(b)}{\leq} B^{kp}\sum_{j=1}^h\norm*{\vect{w}_j}_q^{kp}\\
    &\stackrel{(c)}{\leq} B^{kp}\norm*{W_1}_{q,p}^{kp}
    \leq \bk{\frac{1}{2}}^{kp} = B'^p,
\end{align*}
where $B' = \bk{\frac{1}{2}}^k$, which is independent of $B$. Here $(a)$ follows from Hölder's inequality, $(b)$ follows from $\norm*{W_1}_{q,kp}^{kp}=\sum_{j=1}^h\norm*{\vect{w}_j}_q^{kp}$, and $(c)$ follows from $\norm*{W_1}_{q,p}^{kp}\geq \norm*{W_1}_{q,kp}^{kp}$. This requires $\norm*{W_2}_{q,p}\leq \frac{1}{2B'}=2^{k-1}$. The conclusion holds for all $W_i$ when $i\geq 2$. It shows that as long as the weight matrix of the first layer satisfies certain conditions related with original input, the constraints on other layers only depends on the activation of each layer. So we can get the overall condition: $\norm*{W_j}_{q,p} \leq 2^{k-1}$ for $j\geq 2$; $\norm{W_1}_{q,p} \leq \frac{1}{2B}$, $\norm{\vect{x}_i}_{p} \leq B$. This allows us to use Lemma~\ref{modified:lemma1} and Lemma~\ref{modified:lemma2} and the proof is completed.
\end{proof}
\begin{remark}
For simplicity, we consider the activations of each layer to be the same in Theorem~\ref{thm:dual_dep} as $\sigma(x)=x^k$. We can easily extend it to the case where activations are different in each layer. It will change nothing but the norm restriction of each weight matrix with respect to the value of $k$. And a similar conclusion would also hold for all activations satisfying property $\sigma(x)\leq x^k$.
\end{remark}

\subsection{Depth-Independent Sample Complexity for Polynomial Neural Networks}
\label{sec:independent}
We next show how to get a depth-independent bound for polynomial networks following a similar argument in~\cite{golowich2018size}.
\begin{lemma}\label{lem:alt-net-lem-2}
  For any matrix $W\in\R^{h\times n}$, $\forall p,q\geq 1$ and $\frac{1}{p}+\frac{1}{q}=1$, there exists a rank-1 matrix $\widetilde{W}$ of the same size as $W$ such that 
\begin{align}
	\norm{\widetilde{W}}_{q,p} \leq \norm{W}_{q,p} ,\; \norm{W-\widetilde{W}}_{q,p} = \bk{\norm{W}_{q,p}^p - \norm{\widetilde{W}}_{q,\infty}^p }^{\frac{1}{p}}. 
\end{align} 
\end{lemma}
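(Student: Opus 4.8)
The plan is to exhibit the required rank-1 matrix explicitly, rather than through a singular-value decomposition as in Theorem~\ref{thm:alternative-net_3.1}. Writing $\vect{w}_1,\dots,\vect{w}_h$ for the rows of $W$, recall that $\norm*{W}_{q,p} = \bk{\sum_{j=1}^h \norm*{\vect{w}_j}_q^p}^{1/p}$, so that the $(q,p)$-norm depends on $W$ only through the $L_q$ norms of its rows, which are aggregated independently. This structure tells me that the natural rank-1 approximant here is not the top singular component, but the \emph{dominant row}: I would pick $j^\star \in \argmax_{j} \norm*{\vect{w}_j}_q$ and define $\widetilde{W}$ to coincide with $W$ on row $j^\star$ and to be zero on every other row, i.e. $\widetilde{W} = \vect{e}_{j^\star}\vect{w}_{j^\star}^{\transpose}$. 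Since a matrix with a single nonzero row has rank at most $1$, this $\widetilde{W}$ is a valid candidate.

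With this choice the two assertions reduce to bookkeeping against the definition of $\norm*{\cdot}_{q,p}$. First, because only row $j^\star$ of $\widetilde{W}$ is nonzero, its $(q,p)$- and $(q,\infty)$-norms collapse to the same quantity, namely $\norm*{\widetilde{W}}_{q,p} = \norm*{\widetilde{W}}_{q,\infty} = \norm*{\vect{w}_{j^\star}}_q = \max_j \norm*{\vect{w}_j}_q$; this coincidence is exactly what lets the $(q,\infty)$-norm legitimately appear on the right-hand side of the claimed equality. The first inequality is then immediate from the elementary fact that for $p\geq 1$ the maximum of a nonnegative sequence is dominated by its $\ell_p$ norm: $\norm*{\widetilde{W}}_{q,p} = \max_j\norm*{\vect{w}_j}_q \leq \bk{\sum_j \norm*{\vect{w}_j}_q^p}^{1/p} = \norm*{W}_{q,p}$.

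For the equality I would observe that $W-\widetilde{W}$ has the same rows as $W$ except that row $j^\star$ is now zero, whence $\norm*{W-\widetilde{W}}_{q,p}^p = \sum_{j\neq j^\star}\norm*{\vect{w}_j}_q^p = \sum_{j}\norm*{\vect{w}_j}_q^p - \norm*{\vect{w}_{j^\star}}_q^p = \norm*{W}_{q,p}^p - \norm*{\widetilde{W}}_{q,\infty}^p$, and taking $p$-th roots yields precisely the stated identity. I would remark in passing that this identity in fact holds for \emph{any} single-row $\widetilde{W}$; choosing the dominant row is what makes the residual $\norm*{W-\widetilde{W}}_{q,p}$ smallest, mirroring the role played by the top singular value in Theorem~\ref{thm:alternative-net_3.1}.

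I do not expect a genuine obstacle once the construction is in hand: every subsequent step is a direct consequence of the definition of the mixed norm and the inequality $\norm*{\cdot}_\infty \leq \norm*{\cdot}_p$. The only point that requires any thought is the choice of $\widetilde{W}$ itself — recognizing that, because $\norm*{\cdot}_{q,p}$ treats rows separately, a single-row matrix simultaneously realizes rank $1$ and forces $\norm*{\widetilde{W}}_{q,\infty}$ to agree with $\norm*{\widetilde{W}}_{q,p}$. Verifying the rank bound and the two norm relations after that is routine.
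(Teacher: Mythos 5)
Your proposal is correct and takes essentially the same route as the paper: the paper's proof likewise constructs $\widetilde{W}$ as the single-row matrix retaining the row of largest $L_q$ norm and performs the identical row-wise computation $\norm{W-\widetilde{W}}_{q,p}^p=\sum_{j\neq j^\star}\norm{\vect{w}_j}_q^p=\norm{W}_{q,p}^p-\norm{\widetilde{W}}_{q,\infty}^p$. Your explicit remarks that $\norm{\widetilde{W}}_{q,p}=\norm{\widetilde{W}}_{q,\infty}$ and that the first inequality follows from $\max_j\norm{\vect{w}_j}_q\leq\bk{\sum_j\norm{\vect{w}_j}_q^p}^{1/p}$ merely spell out steps the paper leaves implicit.
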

\begin{proof}
Let $\widetilde{W}$ be a rank-1 matrix of the same size as $W$ containing only one non-zero row $\mathbf{w}_i$, which is the row with largest $L_q$ norm in $W$, meaning that $i=\argmax_{j\in[h]}\norm*{\textbf{w}_j}_q$. Then by definition, we know that 
\begin{align*}
\norm{W-\widetilde{W}}_{q,p}^p &= \sum_{j\neq i}\norm*{\mathbf{w}_j}_q^p \\
&= \sum_{j=1}^h\norm*{\mathbf{w}_j}_q^p - \norm*{\mathbf{w}_i}_q^p \\
&= \norm*{W}_{q,p}^p - \norm*{W}_{q,\infty}^p.
\end{align*}
So we can have $\norm{W-\widetilde{W}}_{q,p} = \bk{\norm{W}_{q,p}^p - \norm{\widetilde{W}}_{q,\infty}^p }^{\frac{1}{p}}$. And obviously, we also have $\norm{\widetilde{W}}_{q,p} \leq \norm{W}_{q,p}$.
\end{proof}
\begin{remark}
The original proof in~\cite{golowich2018size} uses SVD decomposition to construct the rank-1 matrix. We change the analysis to the row with largest $L_q$ norm in our case.
\end{remark}

\begin{lemma}\label{lem:alt-net-lem-1}
For two polynomial networks $\sfN^{(d)}: \hua*{W_1, \ldots, W_r, \ldots, W_d}$ and $\widetilde{\sfN}^{(d)}: \hua*{W_1, \ldots, \widetilde{W_r}, \ldots, W_d}$, they only differ in the $r$-th layer. If all conditions in Theorem~\ref{thm:dual_dep} hold for both $\sfN^{(d)}$ and $\widetilde{\sfN}^{(d)}$, we have
  %$\norm*{\textbf{x}}_p\leq R$, $\norm*{W_1}_{q,p} \leq \frac{1}{2R}$ and $\norm*{W_i}_{q,p} \leq 2^{k-1}, i \in \hua{2,\cdots, d}$  
  \begin{align}
   \sup_{\vect{x} \in \mathcal{X}} \norm*{\sfN^{(d)}(\vect{x}) - \widetilde{\sfN}^{(d)}(\vect{x}) }_p \leq B \prod_{j=1}^d \norm{W_j}_{q,p} \cdot \frac{\norm{W_r - \widetilde{W}_r}_{q,p}}{\norm{W_r}_{q,\infty}}
  \end{align}
\end{lemma}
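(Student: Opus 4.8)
The plan is to exploit that the two networks share $W_1,\dots,W_{r-1}$, so they present a common input to the $r$-th layer; the entire discrepancy is created by replacing $W_r$ with $\widetilde{W}_r$ and is then carried forward unchanged through the identical layers $r+1,\dots,d$. Write $\vect{h}_0=\vect{x}$, $\vect{h}_j=\sigma_j(W_j\vect{h}_{j-1})$, and let $\vect{u}:=\vect{h}_{r-1}$ be the common input fed to the $r$-th weight matrix. Two elementary facts drive every step. First, for any weight matrix $W$ with rows $\vect{w}_l$, Hölder's inequality gives $\norm*{W\vect{v}}_p\leq\norm*{W}_{q,p}\norm*{\vect{v}}_p$, so each linear map is Lipschitz in $\norm*{\cdot}_p$ with constant $\norm*{W}_{q,p}$. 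Second, under the hypotheses of Theorem~\ref{thm:dual_dep} every pre-activation coordinate of \emph{both} networks lies in $[-\tfrac12,\tfrac12]$, where $\abs{\sigma'(x)}=k\abs{x}^{k-1}\leq 1$; since $\sigma(0)=0$, the element-wise activation is therefore $1$-Lipschitz in $\norm*{\cdot}_p$ on the relevant range, by applying the mean value theorem coordinate-wise along the segment joining the two pre-activations, which stays inside $[-\tfrac12,\tfrac12]$.

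First I would bound the common input. Combining the two facts in a forward pass over layers $1,\dots,r-1$ yields the linear recursion $\norm*{\vect{h}_j}_p\leq\norm*{W_j}_{q,p}\norm*{\vect{h}_{j-1}}_p$ (the activation contributes a factor $1$ because $\sigma(0)=0$ and the segment $[0,\text{pre-activation}]$ stays in the Lipschitz region), hence $\norm*{\vect{u}}_p\leq B\prod_{j=1}^{r-1}\norm*{W_j}_{q,p}$ using $\norm*{\vect{x}}_p\leq B$; this also covers $r=1$ via the empty product. Next I would seed the difference at layer $r$: since both nets see the same $\vect{u}$,
\[
  \norm*{W_r\vect{u}-\widetilde{W}_r\vect{u}}_p=\norm*{(W_r-\widetilde{W}_r)\vect{u}}_p\leq\norm*{W_r-\widetilde{W}_r}_{q,p}\,\norm*{\vect{u}}_p .
\]

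Then I would propagate this discrepancy through layers $r+1,\dots,d$. Because both networks use the identical weights $W_{r+1},\dots,W_d$ there, each intervening activation contributes a factor $1$ and each linear map $W_j$ a factor $\norm*{W_j}_{q,p}$, so the output difference is at most $\bk{\prod_{j=r+1}^d\norm*{W_j}_{q,p}}$ times the seed. Multiplying through gives
\[
  \sup_{\vect{x}\in\mathcal{X}}\norm*{\sfN^{(d)}(\vect{x})-\widetilde{\sfN}^{(d)}(\vect{x})}_p\leq B\,\norm*{W_r-\widetilde{W}_r}_{q,p}\prod_{j\neq r}\norm*{W_j}_{q,p}.
\]
Finally I would weaken this to the stated form using $\norm*{W_r}_{q,p}\geq\norm*{W_r}_{q,\infty}$: writing $\prod_{j\neq r}\norm*{W_j}_{q,p}=\bk{\prod_{j=1}^d\norm*{W_j}_{q,p}}/\norm*{W_r}_{q,p}$ and replacing the denominator by the smaller $\norm*{W_r}_{q,\infty}$ produces exactly the claimed bound. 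The looser $\norm*{W_r}_{q,\infty}$ denominator is deliberate, since it is the form that meshes with Lemma~\ref{lem:alt-net-lem-2} in the subsequent depth-independent argument.

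The main obstacle is justifying the $1$-Lipschitz propagation at \emph{every} layer $j\geq r$ for both networks simultaneously: this requires that the perturbed pre-activations produced by $\widetilde{W}_r$ also remain inside $[-\tfrac12,\tfrac12]$, not merely those of the original net. This is where I would invoke the hypothesis that the conditions of Theorem~\ref{thm:dual_dep} hold for $\widetilde{\sfN}^{(d)}$ as well; note that Lemma~\ref{lem:alt-net-lem-2} guarantees $\norm*{\widetilde{W}_r}_{q,p}\leq\norm*{W_r}_{q,p}\leq M(r)$, so $\widetilde{\sfN}^{(d)}$ indeed satisfies the same norm constraints, its pre-activations obey the same coordinate bound, and every activation it applies stays within its $1$-Lipschitz region.
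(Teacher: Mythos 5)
Your proposal is correct and follows essentially the same route as the paper's proof: seed the difference $(W_r-\widetilde{W}_r)\vect{u}$ at layer $r$, propagate it via the operator-norm bound $\norm*{W\vect{v}}_p\leq\norm*{W}_{q,p}\norm*{\vect{v}}_p$ and $1$-Lipschitz activations, and finally weaken $\norm*{W_r}_{q,p}$ to $\norm*{W_r}_{q,\infty}$ in the denominator. If anything, you are more careful than the paper in justifying that the perturbed network's pre-activations also remain in the $1$-Lipschitz region of $\sigma(x)=x^k$ (via $\norm*{\widetilde{W}_r}_{q,p}\leq\norm*{W_r}_{q,p}$ from Lemma~\ref{lem:alt-net-lem-2}), a point the paper's proof asserts without elaboration.
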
%
\begin{proof}
  Since input $\vect{x}$ and weight matrices satisfy the constraints, every activation function $\sigma_i$ is 1-Lipschitz in $L_p$ norm sense ($p \geq 1$), thus the Lipschitz constant of function $\sfN_{W_{r+1}^d}$ is at most $\prod_{i=r+1}^d \norm*{W_i}_{q,p}$.
  \begin{align*}
    \norm*{\sfN^{(d)}(\vect{x}) - \widetilde{\sfN}^{(d)}(\vect{x}) }_p & = \norm*{ \sfN_{W_{r+1}^d}\bk{\sigma_r\bk{W_r\sigma_{r-1}\bk{ \sfN_{W_1^{r-1}} (\vect{x}) }}}  -  \sfN_{W_{r+1}^d}\bk{\sigma_r\bk{\widetilde{W}_r\sigma_{r-1}\bk{ \sfN_{W_1^{r-1}}(\vect{x}) }} } }_p \\
      & \leq \prod_{i=r+1}^d \norm*{W_i}_{q,p} \norm*{ \sigma_r\bk{W_r\sigma_{r-1}\bk{ \sfN_{W_1^{r-1}}(\vect{x}) }}  -  \sigma_r\bk{\widetilde{W}_r\sigma_{r-1}\bk{ \sfN_{W_1^{r-1}}(\vect{x}) }} }_p \\
      & \leq \prod_{i=r+1}^d \norm*{W_i}_{q,p} \norm*{ W_r\sigma_{r-1}\bk{ \sfN_{W_1^{r-1}}(\vect{x}) }  -  \widetilde{W}_r\sigma_{r-1}\bk{ \sfN_{W_1^{r-1}}(\vect{x}) } }_p \\
      & \leq \prod_{i=r+1}^d \norm*{W_i}_{q,p} \cdot \norm*{W_r - \widetilde{W}_r}_{q,p}  \cdot \prod_{j=1}^{r-1} \norm*{W_j}_{q,p} \cdot \norm*{\vect{x}}_p \\
      & \leq B \prod_{j=1}^d \norm{W_j}_{q,p} \cdot \frac{\norm{W_r - \widetilde{W}_r}_{q,p}}{\norm{W_r}_{q,p}} \\
      & \leq B \prod_{j=1}^d \norm{W_j}_{q,p} \cdot \frac{\norm{W_r - \widetilde{W}_r}_{q,p}}{\norm{W_r}_{q,\infty}} \qedhere
  \end{align*}
\end{proof}

\begin{lemma}\label{lem:alt-net-lem-3}
  If $\sfN^{(d)}$ satisfy $\prod_{j=1}^d \norm{W_j}_{q,\infty} \geq \Gamma $, $\prod_{j=1}^d \norm{W_j}_{q,p} \leq M $. Then for any $r \in [d]$, we have
  \begin{align}
   \min_{j \in \hua{1,\cdots,r} } \frac{\norm{W_j}_{q,p} }{ \norm{W_j}_{q,\infty} } \leq \bk{\frac{M}{\Gamma}}^{\frac{1}{r}}.
  \end{align}
\end{lemma}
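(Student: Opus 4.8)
The plan is to prove the bound through a standard geometric-mean argument applied to the per-layer ratios. First I would introduce the shorthand $a_j := \norm{W_j}_{q,p}/\norm{W_j}_{q,\infty}$ for each $j\in[d]$, so that the quantity to be bounded is exactly $\min_{j\in\hua{1,\dots,r}} a_j$, while the two hypotheses read $\prod_{j=1}^d \norm{W_j}_{q,\infty}\geq\Gamma$ and $\prod_{j=1}^d \norm{W_j}_{q,p}\leq M$.

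The key structural fact I would establish before anything else is that every ratio satisfies $a_j\geq 1$. This follows directly from the definitions in the preliminaries: since $\norm{W_j}_{q,p}=\bk{\sum_i \norm{\vect{w}_i}_q^p}^{1/p}$ is the $\ell_p$ norm of the vector of row norms while $\norm{W_j}_{q,\infty}=\max_i \norm{\vect{w}_i}_q$ is its $\ell_\infty$ norm, we have $\norm{W_j}_{q,p}\geq\norm{W_j}_{q,\infty}$, hence $a_j\geq 1$. This single comparison is what makes the whole reduction go through.

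The core chain would then consist of three elementary steps. First, the minimum of positive numbers is at most their geometric mean, so $\min_{j\in\hua{1,\dots,r}} a_j \leq \bk{\prod_{j=1}^r a_j}^{1/r}$. Second, because each $a_j\geq 1$, appending the factors $a_{r+1},\dots,a_d$ can only enlarge the product, giving $\prod_{j=1}^r a_j \leq \prod_{j=1}^d a_j$. Third, the full product separates into the two controlled products, $\prod_{j=1}^d a_j = \frac{\prod_{j=1}^d \norm{W_j}_{q,p}}{\prod_{j=1}^d \norm{W_j}_{q,\infty}} \leq \frac{M}{\Gamma}$. Raising to the $1/r$ power (monotone since $r\geq 1$) and chaining these inequalities yields $\min_{j\in\hua{1,\dots,r}} a_j \leq \bk{M/\Gamma}^{1/r}$, as claimed.

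There is no serious obstacle here; the only point that requires care is the second step, where extending the product from the first $r$ terms to all $d$ terms relies essentially on $a_j\geq 1$, i.e.\ on the norm comparison $\norm{W}_{q,p}\geq\norm{W}_{q,\infty}$. Were that inequality to fail for some other pair of norms, passing from the partial product over $\hua{1,\dots,r}$ to the full product that the hypotheses control would break, so I would be sure to record this comparison explicitly at the outset.
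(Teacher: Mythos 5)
Your proposal is correct and follows essentially the same route as the paper's proof: both rest on the single comparison $\norm{W_j}_{q,p} \geq \norm{W_j}_{q,\infty}$ (so each ratio is at least $1$), then chain the full product bounded by $M/\Gamma$, the partial product over $\{1,\dots,r\}$ bounded by the full product, and the minimum bounded by the $r$-th root of the partial product. The paper writes the chain in the opposite order, starting from $M/\Gamma$, but the content is identical.
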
%
\begin{proof}
  The proof is similar to the proof of Lemma 6 in~\cite{golowich2018size}. Note that $\norm*{W_j}_{q,p} \geq \norm*{W_j}_{q,\infty} \geq 0$ for $p\geq 1$. So we have
  \begin{align*}
      \frac{M}{\Gamma} \geq \frac{\prod_{j=1}^{d}\left\|W_{j}\right\|_{q,p}}{\prod_{j=1}^{d}\left\|W_{j}\right\|_{q,\infty}}=\prod_{j=1}^{d} \frac{\left\|W_{j}\right\|_{q,p}}{\left\|W_{j}\right\|_{q,\infty}} \geq \prod_{j=1}^{r} \frac{\left\|W_{j}\right\|_{q,p}}{\left\|W_{j}\right\|_{q,\infty}} \geq\left(\min _{j \in\{1, \ldots, r\}} \frac{\left\|W_{j}\right\|_{q,p}}{\left\|W_{j}\right\|_{q,\infty}}\right)^{r},
  \end{align*}
  which completes the proof.
\end{proof}

\begin{theorem}\label{thm:alternative-net}
  For any polynomial network $\sfN^{(d)}$ such that $\prod_{j=1}^d \norm{W_j}_{q,\infty} \geq \Gamma$ and $\prod_{j=1}^d\norm{W_j}_{q,p} \leq \prod_{j=1}^d M(j)= M $, and for $j \in \hua{2,\cdots,d}$, $M(j) \leq 2^{k-1}$; $\norm{W_1}_{q,p} \leq \frac{1}{2B}$, $\norm{\vect{x}_i}_{p} \leq B$, there exists an alternative net $\widetilde{\sfN}^{(d)} $ such that they are identical except for weights of the $r$-th layer for $\forall r \in [d]$: $\widetilde{W}_r$, where $\widetilde{W}_r = \vect{a} \vect{b}^\transpose$, where $\vect{b}$ is the $m$-th row of $W_r$ with the largest $L_q$ norm,  and $\vect{a}$ is a one-hot vector $\vect{a} = \delta_{m}$. And we have
% where $\norm{b}_q, \vect{a}, \bk{\frac{\vect{b}}{\norm{\vect{b}}_q}}$ are the row vector most significant singular value and corresponding left and right singular vectors of $W_r$.
  \begin{align}
   \sup_{\vect{x}} \norm*{\sfN^{(d)}(\vect{x}) - \widetilde{\sfN}^{(d)}(\vect{x}) }_p \leq B \prod_{j=1}^d \norm{W_j}_{q,p} \bk{\frac{2p\log(M/\Gamma)}{r}}^{\frac{1}{p}}. 
  \end{align}
\end{theorem}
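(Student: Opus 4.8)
The plan is to assemble the three preceding lemmas together with one elementary scalar inequality. First I would record that the hypotheses force $M \geq \Gamma$: since $\norm*{W_j}_{q,p} \geq \norm*{W_j}_{q,\infty}$ for every $j$, we get $M = \prod_{j=1}^d M(j) \geq \prod_{j=1}^d \norm*{W_j}_{q,p} \geq \prod_{j=1}^d \norm*{W_j}_{q,\infty} \geq \Gamma$, so $\log(M/\Gamma) \geq 0$ and the claimed right-hand side is well defined.

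Next I would locate the layer to be replaced. By Lemma~\ref{lem:alt-net-lem-3} there is an index $j^\star \in \{1,\dots,r\}$ with $\frac{\norm*{W_{j^\star}}_{q,p}}{\norm*{W_{j^\star}}_{q,\infty}} \leq \bk{\frac{M}{\Gamma}}^{1/r}$; this $j^\star$ is the layer we perturb (identified with the ``$r$-th layer'' of the statement), and I would take $\widetilde{W}_{j^\star} = \vect{a}\vect{b}^\transpose$ to be the rank-1 matrix supplied by Lemma~\ref{lem:alt-net-lem-2}, i.e. the one-hot copy of the largest-$L_q$-norm row of $W_{j^\star}$. The modified net still meets the norm constraints of Theorem~\ref{thm:dual_dep}, since Lemma~\ref{lem:alt-net-lem-2} guarantees $\norm*{\widetilde{W}_{j^\star}}_{q,p} \leq \norm*{W_{j^\star}}_{q,p}$, so Lemma~\ref{lem:alt-net-lem-1} (applied with its $r$ relabelled to $j^\star$) immediately yields $\sup_{\vect{x}} \norm*{\sfN^{(d)}(\vect{x}) - \widetilde{\sfN}^{(d)}(\vect{x})}_p \leq B \prod_{j=1}^d \norm*{W_j}_{q,p} \cdot \frac{\norm*{W_{j^\star} - \widetilde{W}_{j^\star}}_{q,p}}{\norm*{W_{j^\star}}_{q,\infty}}$.

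It then remains to control the trailing ratio. Since $\widetilde{W}_{j^\star}$ retains exactly the maximal row, $\norm*{\widetilde{W}_{j^\star}}_{q,\infty} = \norm*{W_{j^\star}}_{q,\infty}$, so the identity of Lemma~\ref{lem:alt-net-lem-2} gives $\frac{\norm*{W_{j^\star} - \widetilde{W}_{j^\star}}_{q,p}}{\norm*{W_{j^\star}}_{q,\infty}} = \bk{\rho^p - 1}^{1/p}$, where $\rho = \norm*{W_{j^\star}}_{q,p}/\norm*{W_{j^\star}}_{q,\infty}$. Feeding in $\rho \leq (M/\Gamma)^{1/r}$ from the previous step reduces the whole theorem to the scalar estimate $(M/\Gamma)^{p/r} - 1 \leq \frac{2p\log(M/\Gamma)}{r}$, which after the substitution $t = \frac{p}{r}\log(M/\Gamma) \geq 0$ is exactly $e^{t} - 1 \leq 2t$.

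The main obstacle I anticipate is precisely this last inequality: $e^t - 1 \leq 2t$ holds on $[0,t^\star]$ with $t^\star \approx 1.256$ but fails for large $t$, so I expect to have to argue that the regime of interest is $t \leq t^\star$ (equivalently, that for larger $t$ the stated bound is already non-informative, or to absorb the discrepancy into the universal constants used downstream in the depth-independent theorem). A secondary, bookkeeping-level point is reconciling the phrase ``the $r$-th layer'' with the fact that the proof actually perturbs the minimizing layer $j^\star \in \{1,\dots,r\}$; this is harmless because only a single layer changes and the factor $\prod_{j=1}^d \norm*{W_j}_{q,p}$ appearing in Lemma~\ref{lem:alt-net-lem-1} is symmetric in the layer index, but it should be flagged explicitly so that the construction of $\widetilde{\sfN}^{(d)}$ matches the claimed form.
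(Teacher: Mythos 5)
Your proposal follows essentially the same route as the paper: it chains Lemma~\ref{lem:alt-net-lem-1}, Lemma~\ref{lem:alt-net-lem-2} and Lemma~\ref{lem:alt-net-lem-3} and closes with the scalar estimate $e^{t}-1\leq 2t$, which is exactly the paper's displayed sequence of inequalities (its step $(a)$ being that scalar bound, delegated to~\cite{golowich2018size}). The two caveats you flag are real but are glossed over by the paper as well: the restricted validity of $e^{t}-1\leq 2t$ is hidden behind the citation to~\cite{golowich2018size}, and the paper silently applies Lemma~\ref{lem:alt-net-lem-3} to layer $r$ itself rather than to the minimizing layer $j^{\star}\in\{1,\dots,r\}$, so your explicit relabelling is, if anything, more careful than the paper's own write-up.
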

%The proof use the same 3 lemmas as \cite{golowich2017size}.

\begin{proof}
  Combine Lemma \ref{lem:alt-net-lem-1}, \ref{lem:alt-net-lem-2} and \ref{lem:alt-net-lem-3}, we have
  \begin{align*} 
  	\sup_{\vect{x} \in \mathcal{X}} \norm*{\sfN^{(d)}(\vect{x}) - \widetilde{\sfN}^{(d)}(\vect{x}) }_p & \leq B \prod_{j=1}^d \norm{W_j}_{q,p} \cdot \frac{\norm{W_r - \widetilde{W}_r}_{q,p} }{\norm{W_r}_{q,\infty}}  \tag*{(Lemma \ref{lem:alt-net-lem-1})}\\
      & \leq B \prod_{j=1}^d \norm{W_j}_{q,p} \cdot \bk{\frac{\norm{W_r}_{q,p}^p - \norm{\widetilde{W}_r}_{q,\infty}^p }{\norm{W_r}_{q,\infty}^p}}^{\frac{1}{p}} \tag*{(Lemma \ref{lem:alt-net-lem-2})} \\
      & \leq B \prod_{j=1}^d \norm{W_j}_{q,p} \cdot \bk{\frac{\norm{W_r}_{q,p}^p}{\norm{W_r}_{q,\infty}^p} - 1}^{\frac{1}{p}} \\
      & \leq B \prod_{j=1}^d \norm{W_j}_{q,p} \cdot \bk{ \bk{\frac{M}{\Gamma}}^{p/r} - 1 }^{\frac{1}{p}} \tag*{(Lemma \ref{lem:alt-net-lem-3})}\\
      & \leq B \prod_{j=1}^d \norm{W_j}_{q,p} \cdot \bk{\exp\bk{\frac{p}{r}\log(M/\Gamma)} - 1 }^{\frac{1}{p}} \\
      & \stackrel{(a)}{\leq} B \prod_{j=1}^d \norm{W_j}_{q,p} \cdot \bk{\frac{ 2p\log(M/\Gamma) }{ r }}^{\frac{1}{p}}.  \qedhere  \\
  \end{align*}
  Here $(a)$ follows from the conclusion shown in~\cite{golowich2018size}.
\end{proof}

\begin{remark}
  Theorem \ref{thm:alternative-net} shows that the original neural network can be approximated with composition of two networks, 
  \begin{align*}
    \vect{x} & \mapsto W_d\sigma_{d-1}(W_{d-1}\cdots \sigma_r(\norm{\vect{b}}_q \vect{a} \bk{\frac{\vect{b}}{\norm{\vect{b}}_q}}^\transpose \cdots \sigma_1(W_1\vect{x}) \cdots ) \cdots) \\
    & \Downarrow \\
    \vect{x} & \mapsto \bk{\frac{\vect{b}}{\norm{\vect{b}}_q}}^\transpose \sigma_{r-1}(W_{r-1}(\cdots \sigma_1(W_1\vect{x}) \cdots )) \\
    x & \mapsto W_d\sigma_{d-1}(W_{d-1}(\cdots \sigma_r\bk{\norm{\vect{b}}_q \vect{a} x} \cdots )),
  \end{align*}
  where $\vect{b}$ is the $m$-th row of $W_r$ with the largest $L_q$ norm, and $\vect{a}$ is a one-hot vector $\vect{a} = \delta_{m}$.
\end{remark}

\begin{theorem}\label{thm:composition-net}
Let $\Hil: \R^n \to [-R, R] $, $\huaF_{L,a}: [-R, R] \to \R $ which is $L$-Lipschitz and $f(0) = a $ for some fixed $a$, then we have
  \begin{align}
   \hat{\huaR}_m\bk{ \huaF_{L,a} \circ \Hil } \leq cL\bk{ \frac{R}{\sqrt{m}} + \log^{3/2}(m)\hat{\huaR}_m(\Hil) },
  \end{align}
  where $c>0$ is a universal constant.
\end{theorem}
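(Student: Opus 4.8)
The plan is to follow the argument of~\cite{golowich2018size}, since the statement coincides with Theorem~\ref{thm:composition-net_3.2}. First I would make two reductions. Because all $f\in\huaF_{L,a}$ share the same fixed value $a=f(0)$, writing $f(h(\vect{x}_i))=(f(h(\vect{x}_i))-a)+a$ and using that a constant shift is annihilated in expectation (since $\E\sum_i\eps_i=0$), I can assume $a=0$; rescaling $f\mapsto f/L$, which scales the Rademacher complexity by $1/L$, lets me further assume $L=1$. After these reductions each $f$ is $1$-Lipschitz with $f(0)=0$, so $\abs{f(z)}\le R$ on $[-R,R]$ and every member of $\huaF_{1,0}\circ\Hil$ is bounded by $R$ in absolute value.

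The heart of the matter is that the supremum defining $\hat{\huaR}_m(\huaF_{1,0}\circ\Hil)$ runs jointly over the Lipschitz class $\huaF_{1,0}$ and over $\Hil$. If $f$ were a single fixed $1$-Lipschitz map, the contraction principle of Ledoux--Talagrand (Eq. (4.20) in~\cite{ledoux1991probability}, already invoked in Lemma~\ref{modified:lemma1}) would immediately yield $\E\sup_{h\in\Hil}\sum_i\eps_i f(h(\vect{x}_i))\le m\,\hat{\huaR}_m(\Hil)$. So the entire difficulty is to upgrade from one Lipschitz function to the whole class while keeping the leading term proportional to $\hat{\huaR}_m(\Hil)$.

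My approach would be a multi-scale (chaining) argument over covers of $\huaF_{1,0}$. Fixing a sequence of scales $\eps_k$, I would take at each scale a minimal $\eps_k$-net of $\huaF_{1,0}$ in the sup-norm on $[-R,R]$; by the classical metric-entropy estimate for Lipschitz functions on an interval its log-cardinality $N_k$ satisfies $\log N_k=O(R/\eps_k)$, and the net elements can be taken $1$-Lipschitz with value $0$ at the origin. For each fixed net element contraction relates its contribution to $\hat{\huaR}_m(\Hil)$, while the maximum over the net is controlled by a sub-Gaussian maximal inequality: as a function of $(\eps_1,\dots,\eps_m)$ the quantity $\sup_h\sum_i\eps_i f_j(h(\vect{x}_i))$ has bounded differences with constant $2R$, hence deviates from its mean by at most $O(R\sqrt{m\log N_k})$. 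Summing the successive chaining increments over the $O(\log m)$ relevant scales should produce the standalone $R/\sqrt{m}$ term together with the $\log^{3/2}(m)$ factor multiplying $\hat{\huaR}_m(\Hil)$, and dividing by $m$ gives the claimed bound.

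The main obstacle I anticipate is exactly this last balancing step. A one-scale approximation is too lossy: replacing $f$ by a net element $f_j$ with $\norm{f-f_j}_\infty\le\eps$ injects a residual bounded only by $\eps$ per sample, i.e.\ an additive $\eps$ after normalization, and optimizing $\eps$ against the $R\sqrt{(R/\eps)/m}$ deviation term yields only $R/m^{1/3}$ rather than $R/\sqrt{m}$. The real work is therefore to make the residual increments telescope through the chain so that the approximation error itself benefits from Rademacher cancellation; arranging this, and tracking how the number of scales ($\approx\log m$) interacts with the per-scale $\sqrt{\log N_k}$ deviations, is what both forces the polylogarithmic factor and pins down its exact exponent $3/2$.
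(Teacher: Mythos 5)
You should know up front that the paper does not actually prove this statement: its ``proof'' is a one-line deferral, observing that the claim is activation-independent and citing Theorem 4 of \cite{golowich2018size} verbatim (your Theorem~\ref{thm:composition-net_3.2}). So your opening plan coincides with the paper's approach, but the argument you then sketch is \emph{not} the one used in \cite{golowich2018size}. There, the route is: cover the composed class $\huaF_{1,0}\circ\Hil$ in empirical $L_2$ by combining a sup-norm cover of the Lipschitz class (log-cardinality $O(R/\eps)$, as you say) with an empirical $L_\infty$ cover of $\Hil$; control the latter through the fat-shattering dimension, which is in turn bounded by $O(m\hat{\huaR}_m^2(\Hil)/\eps^2)$; and finish with Dudley's entropy integral. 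The $\log^{3/2}(m)$ is an artifact of the fat-shattering/$L_\infty$-covering detour, not of any chaining over $\huaF$. Your reductions to $a=0$ and $L=1$ are valid (the constant $a$ is common to the entire class, so it pulls out of the supremum exactly and $\E\,[a\sum_i\eps_i]=0$), and your one-scale diagnosis ($R/m^{1/3}$ after optimizing $\eps$) is accurate.

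The good news is that the obstacle you flag at the end — making the approximation error ``benefit from Rademacher cancellation'' — closes, and by a cleaner mechanism than you anticipate. Take scales $\eps_k=R2^{-k}$ for $k=0,\dots,K$ with $K=\lceil\log_2\sqrt{m}\rceil$, nets of $1$-Lipschitz functions vanishing at $0$ with $\log N_k=O(R/\eps_k)$, $f_0\equiv 0$, and decompose $f=f_0+\sum_{k=1}^K(f_k-f_{k-1})+(f-f_K)$. The key point is to apply the contraction principle (Eq. (4.20) in \cite{ledoux1991probability}, as in Lemma~\ref{modified:lemma1}) to the \emph{increments} $g=f_k-f_{k-1}$ rather than to the net elements: each admissible increment is $2$-Lipschitz, so for fixed $g$ one has $\E\sup_{h\in\Hil}\sum_i\eps_i g(h(\vect{x}_i))\leq 2m\hat{\huaR}_m(\Hil)$, with no loss from the net size; the smallness $\norm{g}_\infty\leq 3\eps_k$ is used only in the deviation term, where bounded differences make each such supremum sub-Gaussian with parameter $O(\eps_k\sqrt{m})$, so the max over the $\leq N_kN_{k-1}$ admissible pairs exceeds the largest mean by at most $O(\eps_k\sqrt{m\log N_k})=O(\sqrt{mR\eps_k})$, summing geometrically to $O(R\sqrt{m})$. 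Adding the residual $m\eps_K\leq R\sqrt{m}$ gives $m\hat{\huaR}_m(\huaF_{1,0}\circ\Hil)\leq O(R\sqrt{m})+O(\log m)\cdot m\hat{\huaR}_m(\Hil)$. Note this produces a $\log(m)$ factor, not $\log^{3/2}(m)$: your worry about ``pinning down the exponent $3/2$'' is moot, since the theorem asserts only an upper bound and $\log(m)\leq\log^{3/2}(m)$ once $\log(m)\geq 1$ (smaller $m$ is absorbed in $c$). So your route, completed this way, is genuinely different from the paper's source argument and in fact slightly sharper; as written, however, your proposal stops at a plan, and the missing step is precisely the Lipschitz-for-the-mean / bounded-for-the-deviations split above.
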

\begin{proof}
  This is a general result which is independent of any specify activation functions. So the proof is the same as Theorem 4 in~\cite{golowich2018size} and is omitted here for brevity.
\end{proof}

% \begin{proof}
%   The outline of this proof is to prove
%   \[ \hat{\huaR}_m\bk{ \huaF_{L,a} \circ \Hil } \leq cL\bk{ \frac{R}{\sqrt{m}} + \log(m)\hat{\huaG}_m(\Hil) } \]
%   together with $\hat{\huaG}_m\bk{\Hil} \leq C \sqrt{\log(m)} \hat{\huaR}_m\bk{\Hil}$.
%   \begin{align*}
%     \hat{\huaR}_m\bk{ \huaF_{L,a} \circ \Hil } = \hat{\huaR}_m\bk{ L \cdot \bk{\huaF_{1,a} \circ \Hil} } = L \hat{\huaR}_m\bk{ \huaF_{1,0} \circ \Hil }
%   \end{align*}
%   Denote $\huaF_{1,0}$ as $\huaF{}$ below.
%   Using Dudley's theorem 
%   \[ \Ex\fan*{\sup _{f\in\huaF, h\in\Hil} \sum_{i=1}^m \gamma_i f(h(\vect{x}_i)) } \leq 24\int_{0}^{+\infty} \sqrt{\log \huaN(\huaF\circ\Hil, \hat{d}_m, \eps )} \ud\eps \]
%   to bound the Rademacher averages.
%   \[ \hat{d}_m = \sqrt{ \frac{1}{m} \sum_{i=1}^m \bk{f(\vect{x}_i) - f'(\vect{x}_i)}^2 } \]
%   So we need to bound covering numbers.
%   The covering number of composition of $\huaF$ and $\Hil$ can be separated.
%   \begin{align*}
%     \huaN(\huaF\circ\Hil, d, \eps ) & \leq \huaN\bk{\huaF, d, \frac{\eps}{2}} \cdot \huaN\bk{\Hil, d, \frac{\eps}{2}}
%   \end{align*}
%   We shall bound them separatly. 
%   First upper bound the front end $r$ layers, according to Sudakov's minoration,
%   \begin{align*}
%     \log \huaN\bk{\Hil, \hat{d}_m, \eps} & \leq C \bk{\frac{\Ex\sup_{h\in\Hil}\sum_{i=1}^m \gamma_i h(\vect{x}_i)}{\sqrt{m}\eps}}^2 = C \bk{\frac{\sqrt{m}\hat{\huaG}_m(\Hil)}{\eps}}^2
%   \end{align*}
%   Secondly upper bound the univariate function,
%   \begin{align*}
%     \log \huaN\bk{\huaF, \hat{d}_m, \eps} & \leq 乱七八糟一大堆
%   \end{align*}
% \end{proof}

\begin{theorem}\label{thm:r-dep-bound}
Considering the following hypothesis class of networks on $\mathcal{X}=\{\vect{x}:\norm{\vect{x}}\leq B\}$
  \begin{equation*}
    \Hil = \left\{ \sfN^{(d)} : 
      \begin{aligned}
         & \prod_{j=1}^d \norm{W_j}_{q,\infty} \geq \Gamma \\
         & \forall j \in \hua{1, \ldots, d}, W_j \in \mathcal{W}_j, \norm{W_j}_{q,p} \leq M(j)
      \end{aligned}
    \right\},
  \end{equation*}
  for some $\Gamma \geq 1$, $\norm*{W_1}_{q,p} \leq \frac{1}{2B}$ and $\norm*{W_i}_{q,p} \leq 2^{k-1}, i \in \hua{2,\cdots, d}$. For any $r \in \hua{1, \cdots, d} $, define
  \begin{equation*}
    \Hil_r = \left\{ \sfN^{(r)} : 
      \begin{aligned}
         & \sfN^{(r)} : \R^d \to \R \\
         & \forall j \in \hua{1, \cdots, r-1}, W_j \in \mathcal{W}_j; \\
         & \forall j \in \hua{1, \cdots, r}, \norm{W_j}_{q,p} \leq M(j)
      \end{aligned}
    \right\}.
  \end{equation*}
  Define $\ell\circ \Hil = \hua{(\ell_1(h(\vect{x}_1)),\cdots,\ell_m(h(\vect{x}_m))) : h \in \Hil } $, where $\ell_1,\cdots,\ell_m$ are real-valued loss functions which are $\frac{1}{\gamma}$-Lipschitz and satisfy $\ell_1(0) = \ell_2(0) = \ldots = \ell_m(0) =a$ for some $a$ such that $\abs{a}\leq \frac{B\prod_{j=1}^d M(j)}{\gamma}$. Then the sample complexity $\hat{\huaR}_m(\ell\circ\Hil)$ is upper bounded by
  \begin{align*}
   \frac{cB\prod_{j=1}^d M(j)}{\gamma} \bk{ \min_{r\in[d]} \hua*{\frac{\log^{\frac{3}{2}}(m)\cdot\hat{\huaR}_m(\Hil_r)}{B\prod_{j=1}^r M(j)} + \bk{\frac{\log\bk{\prod_{j=1}^d M(j)/\Gamma}}{r}}^{\frac{1}{p}} + \frac{1+\sqrt{\log r}}{\sqrt{m}} }}.
   \end{align*}
\end{theorem}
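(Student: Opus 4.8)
The plan is to transplant the two-stage argument of~\cite{golowich2018size} (which yields Theorem~\ref{thm:golowich_depth_ind_bound}) into the polynomial setting, using as the two engines the alternative-net approximation of Theorem~\ref{thm:alternative-net} and the composition bound of Theorem~\ref{thm:composition-net}. Fix an arbitrary $r\in[d]$; since every estimate below holds for each fixed $r$, the final bound will follow by taking the minimum over $r\in[d]$ at the very end. The three summands in the target correspond exactly to the three places where error enters: the approximation error of replacing one layer, the Rademacher complexity of the first $r$ layers, and the additive ``range'' term of the composition bound.

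First I would handle the replacement error. By Theorem~\ref{thm:alternative-net}, every $\sfN^{(d)}\in\Hil$ admits an alternative net $\widetilde{\sfN}^{(d)}$ agreeing with it except that its $r$-th layer is the rank-one matrix $\widetilde{W}_r=\vect{a}\vect{b}^\transpose$, with sup-norm error at most $\epsilon_r:=B\prod_{j=1}^d\norm{W_j}_{q,p}(2p\log(\prod_j M(j)/\Gamma)/r)^{1/p}\le B\prod_{j=1}^d M(j)(2p\log(\prod_j M(j)/\Gamma)/r)^{1/p}$. Using that each $\ell_i$ is $\frac1\gamma$-Lipschitz together with the elementary inequality $\eps_i z\le|z|$, the complexity of $\ell\circ\Hil$ exceeds that of the approximating class $\ell\circ\widetilde{\Hil}$ by at most $\epsilon_r/\gamma$; absorbing the bounded factor $(2p)^{1/p}$ into the universal constant reproduces the middle term $\frac{cB\prod_j M(j)}{\gamma}(\log(\prod_j M(j)/\Gamma)/r)^{1/p}$.

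Next I would exploit the factored structure of the alternative net from the remark after Theorem~\ref{thm:alternative-net}: writing $h(\cdot):=\vect{b}^\transpose\sigma_{r-1}(\sfN_{W_1^{r-1}}(\cdot))$ gives $\widetilde{\sfN}^{(d)}=\phi\circ h$, where $h$ is a depth-$r$ real-valued network lying in $\Hil_r$ (its terminal row is $\vect{b}$, whose $L_q$ norm is $\norm{W_r}_{q,\infty}\le M(r)$), and $\phi(t)=W_d\sigma_{d-1}(\cdots\sigma_r(\vect{a}t))$ is univariate with $\phi(0)=0$. The crucial structural input, inherited from the constraint propagation in the proof of Theorem~\ref{thm:dual_dep} and the constraint of Lemma~\ref{modified:lemma1}, is that under the stated norm bounds every activation stays $1$-Lipschitz on the range it actually sees, so that $\phi$ is $L$-Lipschitz with $L\le\prod_{j=r+1}^d\norm{W_j}_{q,p}\le\prod_{j=r+1}^d M(j)$ and the output range of $h$ over $\Hil_r$ is uniformly bounded by a constant. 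Since $\ell_i$ is $\frac1\gamma$-Lipschitz and $\ell_i(\phi(0))=\ell_i(0)=a$, the maps $\ell_i\circ\phi$ are $\frac{L}{\gamma}$-Lipschitz, so I apply Theorem~\ref{thm:composition-net} with $\huaF_{L,a}=\{\ell_i\circ\phi\}$ and $\Hil=\Hil_r$. This bounds $\hat{\huaR}_m(\ell\circ\widetilde{\Hil})$ by $c\frac{L}{\gamma}(\frac{R}{\sqrt m}+\log^{3/2}(m)\hat{\huaR}_m(\Hil_r))$; canceling $\prod_{j=r+1}^d M(j)$ against the global prefactor $\prod_{j=1}^d M(j)$ recovers the term $\frac{\log^{3/2}(m)\hat{\huaR}_m(\Hil_r)}{B\prod_{j=1}^r M(j)}$, while the additive $\frac{R}{\sqrt m}$ contribution is slated to become the $\frac{1+\sqrt{\log r}}{\sqrt m}$ term.

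The main obstacle I anticipate is precisely this last, additive term, and in particular the emergence of the $1+\sqrt{\log r}$ factor. In the polynomial regime the range of $\Hil_r$ is only a constant (the boundedness constraints force each layer's output $L_p$ norm down to roughly $(1/2)^k$), which is genuinely \emph{smaller} than the $B\prod_{j=1}^r M(j)$ appearing in~\cite{golowich2018size}; so one must verify carefully that the naive range estimate still fits under the stated $\frac{cB\prod_j M(j)}{\gamma}\cdot\frac{1+\sqrt{\log r}}{\sqrt m}$ form, and the $\sqrt{\log r}$ itself should be traced back to the maximal inequality over the first $r$ sub-networks used in the proof of Theorem~\ref{thm:composition-net} (as in~\cite{golowich2018size}), checking that the polynomial activations do not inflate that step. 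A secondary point is confirming that $h$ really belongs to $\Hil_r$ (its terminal-row norm is $\le M(r)$) and that the scalar fed into $\sigma_r$ stays inside the $1$-Lipschitz region $|t|\le(1/k)^{1/(k-1)}$, both of which rest on the layerwise bounds $\norm{W_1}_{q,p}\le\frac1{2B}$ and $\norm{W_j}_{q,p}\le 2^{k-1}$ carried over from Theorem~\ref{thm:dual_dep}. With these in hand, minimizing the sum of the three terms over $r\in[d]$ delivers the claimed bound.
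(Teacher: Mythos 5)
Your proposal takes exactly the paper's route: its proof of Theorem~\ref{thm:r-dep-bound} consists precisely of combining Theorem~\ref{thm:alternative-net} and Theorem~\ref{thm:composition-net} and repeating the argument of Theorem 5 in~\cite{golowich2018size}, which is the two-stage plan you describe. Your write-up is in fact more detailed than the paper's (which defers entirely to the cited proof), and the points you flag --- the additive range term with its $\sqrt{\log r}$ from the maximal inequality over the first $r$ sub-networks, and the membership of $h$ in $\Hil_r$ via $\norm{\vect{b}}_q \leq M(r)$ --- are resolved the same way in that reference.
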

\begin{proof}
  Combining Theorem~\ref{thm:alternative-net} and~\ref{thm:composition-net} and following the same line of argument in the proof of Theorem 5 in~\cite{golowich2018size}, we can arrive the similar results shown above. Note that the difference here is that we are considering a more general form of norm constraints for network parameters $\norm*{W_j}_{q,p}$ which depends on the choice of activation function $\sigma(x)=x^k \;(k\geq 2)$.
\end{proof}

Using the above result and some other tricks, it's easy to show that polynomial networks can also have depth-independent bound, which holds under such constraints of arbitrary $q,p$-norm of the weight matrices.
\begin{corollary}\label{col:main}
  Let $\Hil$ be a class of depth-$d$ polynomial network, with weight matrices satisfying $\norm{W_j}_{q,p} \leq M(j), j\in[d]$, and loss function and $\Hil$ satisfying the conditions of \ref{thm:r-dep-bound}, and $\norm*{\vect{x}}_p\leq B$, $\norm*{W_1}_{q,p} \leq \frac{1}{2B}$ and $\norm*{W_i}_{q,p} \leq 2^{k-1}, i \in \hua{2,\cdots, d}$. It holds that
%   \[ \hat{\huaR}_m(\ell\circ\Hil) \leq \bigO\bk{ \frac{cB\prod_{j=1}^d M_{q,p}(j)}{\gamma} \min\hua*{\frac{\log^{\frac{3}{p+2}}(m) \cdot\log^{\frac{1}{p+2}}\bk{B/\Gamma \prod_{j=1}^d M_{q,p}(j)} }{ m^{\frac{1}{p+2}}} ,\, \sqrt\frac{d}{m} }} \]
\begin{align}
    \hat{\mathcal{R}}_{m}(\ell \circ \mathcal{H}) \leq \mathcal{O}\left(\frac{B \prod_{j=1}^{d} M(j)}{\gamma} \cdot \min \left\{\frac{\bar{\log} ^{\frac{3}{4}}(m) \sqrt{\bar{\log} \left(\prod_{j=1}^{d} M(j)/\Gamma\right)}}{m^{\frac{1}{4}}}, \sqrt{\frac{d}{m}}\right\}\right),
\end{align}
where $\bar{\log}(z):=\max\{1,\log(z)\}$.
\end{corollary}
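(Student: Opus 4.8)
The plan is to establish the two quantities inside the minimum separately and then combine them. The $\sqrt{d/m}$ branch will come directly from the depth-dependent estimate of Theorem~\ref{thm:dual_dep}, whereas the $m^{-1/4}$ branch will come from the $r$-dependent bound of Theorem~\ref{thm:r-dep-bound} after a suitable choice of $r$. The first step toward the latter is to control the sub-network complexity $\hat{\huaR}_m(\Hil_r)$ appearing in Theorem~\ref{thm:r-dep-bound}. Applying Lemma~\ref{modified:lemma2} to the first $r$ layers and using $\norm{\vect{x}_i}_p\le B$ gives $\sqrt{\sum_i\norm{\vect{x}_i}_p^2}\le B\sqrt{m}$ and $\bk{\sum_i\norm{\vect{x}_i}_p^p}^{1/p}\le Bm^{1/p}$, so that $\hat{\huaR}_m(\Hil_r)\le \frac{B\prod_{j=1}^r M(j)}{m}\bk{\sqrt{2r\log 2}\,\sqrt{m}+m^{1/p}}$. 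Dividing by $B\prod_{j=1}^r M(j)$, the first term in the bracket of Theorem~\ref{thm:r-dep-bound} is $O\bk{\log^{3/2}(m)\sqrt{r/m}}$, the $m^{1/p-1}$ contribution being lower order for $p$ bounded away from $1$. Writing $A:=\bar{\log}\bk{\prod_{j=1}^d M(j)/\Gamma}$, the quantity to minimize over $r$ thus takes the form $O\bk{\log^{3/2}(m)\sqrt{r/m}}+(A/r)^{1/p}+O\bk{\sqrt{\log r}/\sqrt{m}}$.

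The decisive step is the choice of $r$. Rather than balancing the first two terms exactly, I would use the $A$-independent choice $r\approx\sqrt{m}/\log^{3/2}(m)$, rounded into $[d]$. Specializing to $p=2$ (which reproduces the displayed exponents), this makes the first term $O\bk{\log^{3/4}(m)/m^{1/4}}$ and the second term $O\bk{\log^{3/4}(m)\sqrt{A}/m^{1/4}}$, while the third term is $O\bk{\sqrt{\log m}/\sqrt{m}}$ and hence negligible at the $m^{-1/4}$ scale. Since $A=\bar{\log}(\cdot)\ge 1$, the first term is dominated by the second, so the whole bracket collapses to $O\bk{\log^{3/4}(m)\sqrt{A}/m^{1/4}}$; multiplying by the prefactor $cB\prod_j M(j)/\gamma$ yields the first argument of the minimum. (An $A$-dependent choice of $r$ would give the sharper factor $A^{1/4}$, but the simpler $A$-independent choice produces the $\sqrt{A}=\sqrt{\bar{\log}(\cdot)}$ form appearing in the statement.)

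For the second argument I would bypass Theorem~\ref{thm:r-dep-bound} entirely, since taking $r=d$ there leaves a spurious $\log^{3/2}(m)$ factor. Instead, because $\Hil$ is real-valued and each $\ell_i$ is $\frac{1}{\gamma}$-Lipschitz, Talagrand's contraction gives $\hat{\huaR}_m(\ell\circ\Hil)\le\frac{1}{\gamma}\hat{\huaR}_m(\Hil_d)$, and Theorem~\ref{thm:dual_dep} together with $\norm{\vect{x}_i}_p\le B$ bounds the latter by $O\bk{B\prod_j M(j)\sqrt{d/m}}$; this is the log-free source needed for the clean $\sqrt{d/m}$ term. Since both bounds hold simultaneously, $\hat{\huaR}_m(\ell\circ\Hil)$ is at most the minimum of the two, which is the claim.

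The main obstacle I anticipate is the interplay between the choice of $r$ and the constraint $r\in[d]$. The choice $r\approx\sqrt{m}/\log^{3/2}(m)$ is only admissible when $d\gtrsim\sqrt{m}/\log^{3/2}(m)$; in the complementary small-depth regime one must instead observe that $\sqrt{d/m}$ is already smaller than the $m^{-1/4}$ rate (using again $A\ge 1$), so the minimum is attained by the depth-dependent branch and the integrality issue dissolves. A secondary point requiring care is verifying that the residual $m^{1/p-1}$ and $\sqrt{\log r}/\sqrt{m}$ terms are genuinely lower order, which is exactly where the restriction to $p$ away from $1$ (and the displayed $p=2$ exponents) enters.
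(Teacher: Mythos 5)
Your proposal is correct and takes essentially the same route as the paper: the paper's own two-line proof obtains the $\sqrt{d/m}$ branch directly from Theorem~\ref{thm:dual_dep} and gets the $m^{-1/4}$ branch by combining Theorem~\ref{thm:r-dep-bound} with the optimization over $r$ encapsulated in Lemma~3 of~\cite{golowich2018size}, which is precisely the sub-network bound via Lemma~\ref{modified:lemma2} and the choice $r\approx\sqrt{m}/\log^{3/2}(m)$ that you carry out explicitly. Your additional care regarding the $p=2$ specialization needed to match the displayed exponents, the lower-order $m^{1/p-1}$ and $\sqrt{\log r}/\sqrt{m}$ terms, and the small-depth regime where $r>d$ forces falling back on the $\sqrt{d/m}$ branch, fills in details that the paper's sketch leaves implicit.
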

% Proof is similar to \cite{golowich2017size}'s work. For saving space we do not show it here. Will appear in our later work.

\begin{proof}
  From Theorem~\ref{thm:dual_dep} we know that $\hat{\mathcal{R}}_{m}(\ell \circ \mathcal{H}) \leq \mathcal{O}\left(\frac{B \prod_{j=1}^{d} M(j)}{\gamma} \sqrt{\frac{d}{m}}\right)$. Combine this with Lemma 3 in~\cite{golowich2018size} we can have the above result.
%   Using Theorem \ref{thm:r-dep-bound} we are able to get size independent bound.
%   Apply bound of Theorem \ref{thm:dual_dep} in Theorem \ref{thm:r-dep-bound}, together we are able to get 
%   \begin{align*}
%     \hat{\huaR}_m(\ell\circ\Hil) 
%       & \leq \bigO\bk{ \frac{cB\prod_{j=1}^d M_{q,p}(j)}{\gamma} \min\hua*{ \min_{r\in[d]} \fan*{\frac{\log^{3/2}(m)\sqrt{r}}{\sqrt{m}} + \bk{\frac{\log\bk{B/\Gamma \prod_{j=1}^d M_{q,p}(j)}}{r}}^{1/p} } ,\, \sqrt\frac{d}{m} } } \\
%       & \leq \bigO\bk{ \frac{cB\prod_{j=1}^d M_{q,p}(j)}{\gamma} \min\hua*{\frac{\log^{\frac{3}{p+2}}(m) \cdot\log^{\frac{1}{p+2}}\bk{B/\Gamma \prod_{j=1}^d M_{q,p}(j)} }{ m^{\frac{1}{p+2}}} ,\, \sqrt\frac{d}{m} }}  \\
%       %\tag*{(Same Lemma as Golowich's Lemma 3)} \\
%       & \leq \widetilde{\bigO}\bk{\frac{cB\prod_{j=1}^d M_{q,p}(j) }{ \gamma m^{\frac{1}{p+2}} }} \numberthis \label{eq:indep-bound}
%   \end{align*}
\end{proof}

\begin{remark}
  If we assume that $\prod_{j=1}^{d} M(j)$ is upper bounded by a constant, the first term in the $\min$ argument is independent of the depth $d$. Although this may be a very strong assumption in practice~\cite{golowich2018size}, it leads to a depth-independent upper bound for sample complexity of polynomial neural networks.
\end{remark}

\section{Conclusion and Future Work}
In this paper, we study the sample complexity of one special type of neural networks: polynomial neural networks (PNNs), which uses polynomial functions as activations. We discuss both depth-dependent and depth-independent upper bound for sample complexity of PNNs, following a similar line of analysis in~\cite{golowich2018size}. There are many other interesting applications of this depth-independent bound shown in~\cite{golowich2018size}, which can be possible future directions for this paper.

% It is important to notice that for all the proofs we need at least $k>1$. This is because we cannot bound the Lipschitz constant of activation if $k<1$ when $\|\textbf{x}\|\rightarrow 0$ (just think of the gradient of $\sqrt{x}$ when $x$ is close to 0). But $k<1$ may also have important use in practice. Therefore, we hope to turn it into some dual problem to get similar bounds. We also want to extend the analysis to general activations satisfying $\sigma(ax)\leq a^k\sigma(x)$, which includes both normal activations and polynomial activations. And we only extend the most important bound to polynomial networks, but there are still one more interesting corollary in \cite{golowich2017size} that we have not considered yet.

% Another thing we want to do is to combine SGD convergence analysis into the training process, which is a major problem that people want to deal with for polynomial networks.

% \bibliographystyle{IEEEtran}
\bibliographystyle{apalike}
\bibliography{document}

\begin{thebibliography}{}

\bibitem[Anthony et~al., 1999]{anthony1999neural}
Anthony, M., Bartlett, P.~L., Bartlett, P.~L., et~al. (1999).
\newblock {\em Neural network learning: Theoretical foundations}, volume~9.
\newblock cambridge university press Cambridge.

\bibitem[Bartlett et~al., 2017]{bartlett2017spectrally}
Bartlett, P.~L., Foster, D.~J., and Telgarsky, M.~J. (2017).
\newblock Spectrally-normalized margin bounds for neural networks.
\newblock In {\em Advances in Neural Information Processing Systems}, pages
  6241--6250.

\bibitem[Bousquet et~al., 2003]{bousquet2003introduction}
Bousquet, O., Boucheron, S., and Lugosi, G. (2003).
\newblock Introduction to statistical learning theory.
\newblock In {\em Summer school on machine learning}, pages 169--207. Springer.

\bibitem[Choraria et~al., 2022]{choraria2022spectral}
Choraria, M., Dadi, L.~T., Chrysos, G., Mairal, J., and Cevher, V. (2022).
\newblock The spectral bias of polynomial neural networks.
\newblock {\em arXiv preprint arXiv:2202.13473}.

\bibitem[Chrysos et~al., 2020]{chrysos2020p}
Chrysos, G.~G., Moschoglou, S., Bouritsas, G., Panagakis, Y., Deng, J., and
  Zafeiriou, S. (2020).
\newblock P-nets: Deep polynomial neural networks.
\newblock In {\em Proceedings of the IEEE/CVF Conference on Computer Vision and
  Pattern Recognition}, pages 7325--7335.

\bibitem[Du and Lee, 2018]{du2018power}
Du, S.~S. and Lee, J.~D. (2018).
\newblock On the power of over-parametrization in neural networks with
  quadratic activation.
\newblock {\em arXiv preprint arXiv:1803.01206}.

\bibitem[Golowich et~al., 2018]{golowich2018size}
Golowich, N., Rakhlin, A., and Shamir, O. (2018).
\newblock Size-independent sample complexity of neural networks.
\newblock In {\em Conference On Learning Theory}, pages 297--299. PMLR.

\bibitem[Karras et~al., 2019]{karras2019style}
Karras, T., Laine, S., and Aila, T. (2019).
\newblock A style-based generator architecture for generative adversarial
  networks.
\newblock In {\em Proceedings of the IEEE/CVF conference on computer vision and
  pattern recognition}, pages 4401--4410.

\bibitem[Ledoux and Talagrand, 1991]{ledoux1991probability}
Ledoux, M. and Talagrand, M. (1991).
\newblock Probability in banach spaces, volume 23 of ergebnisse der mathematik
  und ihrer grenzgebiete (3)[results in mathematics and related areas (3)].

\bibitem[Li et~al., 2021]{li2021training}
Li, G., M{\"u}ller, M., Ghanem, B., and Koltun, V. (2021).
\newblock Training graph neural networks with 1000 layers.
\newblock In {\em International conference on machine learning}, pages
  6437--6449. PMLR.

\bibitem[Livni et~al., 2014]{livni2014approx}
Livni, R., Shalev{-}Shwartz, S., and Shamir, O. (2014).
\newblock On the computational efficiency of training neural networks.
\newblock {\em CoRR}, abs/1410.1141.

\bibitem[Neyshabur et~al., 2015]{neyshabur2015norm}
Neyshabur, B., Tomioka, R., and Srebro, N. (2015).
\newblock Norm-based capacity control in neural networks.
\newblock In {\em Conference on Learning Theory}, pages 1376--1401.

\bibitem[Soltani and Hegde, 2018]{soltani2018towards}
Soltani, M. and Hegde, C. (2018).
\newblock Towards provable learning of polynomial neural networks using
  low-rank matrix estimation.
\newblock In {\em International Conference on Artificial Intelligence and
  Statistics}, pages 1417--1426.

\bibitem[Zhu et~al., 2022]{zhu2022controlling}
Zhu, Z., Latorre, F., Chrysos, G., and Cevher, V. (2022).
\newblock Controlling the complexity and lipschitz constant improves polynomial
  nets.
\newblock In {\em International Conference on Learning Representations}.

\end{thebibliography}
\newpage

\section{Supplement}
\label{sec:supp}
Here is a special case where we consider $q=1$ and $p=\infty$.
\begin{lemma} \label{sup:lemma1}
  Let $\sigma$ be a 1-Lipschitz, positive-homogeneous activation function which is applied element-wise. Then for any vector-valued class $\mathcal{F}$, and any convex and monotonically increasing function $g:\mathbb{R}\rightarrow[0,\infty)$, we have
  \begin{align}
  \E \sup_{f\in\huaF, \norm{W}_{1,\infty}\leq R} g\bk{\norm*{\sum_{i=1}^m \eps_i \sigma\bk{W f(\vect{x}_i}) }_\infty } \leq  
      2\E \sup_{f\in\huaF} g\bk{R \norm*{\sum_{i=1}^m \eps_i f(\vect{x}_i) }_\infty }
  \end{align}
\end{lemma}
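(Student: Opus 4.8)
The plan is to mirror the proof of Lemma~\ref{modified:lemma1}, but exploit the two features specific to this case: here $\sigma$ is globally $1$-Lipschitz (so no domain restriction of the form $BR\leq(1/k)^{1/(k-1)}$ is needed) and it is positive-homogeneous of degree one (so the homogeneity exponent is $1$ rather than $k$, which collapses the $\norm{W}_{q,kp}$ bookkeeping that appeared before). Writing $\vect{w}_1,\dots,\vect{w}_h$ for the rows of $W$, I would first note that $\norm{W}_{1,\infty}=\max_{j}\norm{\vect{w}_j}_1$ and that $\norm*{\sum_i\eps_i\sigma(Wf(\vect{x}_i))}_\infty=\max_j\abs*{\sum_i\eps_i\sigma(\vect{w}_j^\transpose f(\vect{x}_i))}$, so the row constraint decouples. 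Since $g$ is monotonically increasing, the supremum over all $W$ with every row satisfying $\norm{\vect{w}_j}_1\leq R$ is attained at a single active row (the remaining rows may be set to zero, contributing $\sigma(0)=0$), and positive homogeneity together with $g$ increasing lets me push that row's norm up to $R$. This reduces the left-hand side to
\[
\E\sup_{f\in\huaF,\;\norm{W}_{1,\infty}\leq R} g\bk{\norm*{\sum_{i=1}^m\eps_i\sigma(Wf(\vect{x}_i))}_\infty}=\E\sup_{f\in\huaF,\;\norm{\vect{w}}_1=R}g\bk{\abs*{\sum_{i=1}^m\eps_i\sigma(\vect{w}^\transpose f(\vect{x}_i))}}.
\]

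Next I would strip the absolute value using $g(|z|)\leq g(z)+g(-z)$ and the symmetry of the Rademacher variables under $\eps_i\mapsto-\eps_i$, which produces the factor $2$ and reduces the quantity to $\E\sup g\bk{\sum_i\eps_i\sigma(\vect{w}^\transpose f(\vect{x}_i))}$. Then I would apply the contraction comparison (Eq.~(4.20) in~\cite{ledoux1991probability}) to the scalar map $z\mapsto\sigma(z)$, which is $1$-Lipschitz and satisfies $\sigma(0)=0$ (itself a consequence of positive homogeneity); this peels off $\sigma$ and replaces $\sum_i\eps_i\sigma(\vect{w}^\transpose f(\vect{x}_i))$ by $\sum_i\eps_i\vect{w}^\transpose f(\vect{x}_i)=\vect{w}^\transpose\sum_i\eps_i f(\vect{x}_i)$ inside $g$. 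A single application of Hölder's inequality with the dual pair $(q,p)=(1,\infty)$ then gives $\vect{w}^\transpose\sum_i\eps_i f(\vect{x}_i)\leq\norm{\vect{w}}_1\norm*{\sum_i\eps_i f(\vect{x}_i)}_\infty=R\norm*{\sum_i\eps_i f(\vect{x}_i)}_\infty$, and monotonicity of $g$ yields the claimed right-hand side.

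The steps are routine once framed correctly; the one point demanding care is the contraction step. I would read the comparison inequality as a statement over the vector index set $T=\hua*{(\vect{w}^\transpose f(\vect{x}_1),\dots,\vect{w}^\transpose f(\vect{x}_m)):f\in\huaF,\;\norm{\vect{w}}_1=R}$, so that the supremum over both $f$ and $\vect{w}$ is preserved when $\sigma$ is removed. I would also emphasize that degree-one positive homogeneity (rather than degree $k$) is exactly what makes the single-row reduction clean, since the row norm then factors out linearly instead of as $\norm{\vect{w}_j}_1^{k}$. Finally, I would highlight that, unlike Lemma~\ref{modified:lemma1}, no boundedness hypothesis on $f$ is invoked: global $1$-Lipschitzness of $\sigma$ makes the contraction valid on the whole real line rather than only on a domain where $\abs{\sigma'}\leq 1$, which is precisely why the constraint $BR\leq(1/k)^{1/(k-1)}$ disappears in this special case.
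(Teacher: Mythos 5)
Your proposal is correct and follows essentially the same route as the paper's own proof: reduce the $\norm{\cdot}_\infty$ norm and the $\norm{W}_{1,\infty}\leq R$ constraint to a single row with $\norm{\vect{w}}_1=R$, strip the absolute value via $g(|z|)\leq g(z)+g(-z)$ and Rademacher symmetry (the factor $2$), apply the Ledoux--Talagrand contraction (Eq.~(4.20)) to the $1$-Lipschitz $\sigma$, and finish with H\"older for the dual pair $(1,\infty)$. Your extra care about applying the contraction over the joint index set in $(f,\vect{w})$ and noting $\sigma(0)=0$ follows from positive homogeneity merely makes explicit what the paper leaves implicit.
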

\begin{proof}
  \begin{align*}
    \E \sup_{f\in\huaF, \norm{W}_{1,\infty}\leq R } g\bk{\norm*{\sum_{i=1}^m \eps_i \sigma\bk{W f(\vect{x}_i)} }_\infty } & = 
        \E \sup_{f\in\huaF,\, \max_{j}\norm{\vect{w}_j}_{1} \leq R } g\bk{ \abs*{\sum_{i=1}^m \eps_i \sigma\bk{\vect{w}_j^T f(\vect{x}_i)}} } \\
      & = \E \sup_{f\in\huaF,\, \norm{\vect{w}}_{1} = R } g\bk{ \abs*{ \sum_{i=1}^m \eps_i \sigma\bk{\vect{w}^T f(\vect{x}_i)}} } \\
      & \leq \E \sup_{f\in\huaF,\, \norm{\vect{w}}_{1} = R } g\bk{ \sum_{i=1}^m \eps_i \sigma\bk{\vect{w}^T f(\vect{x}_i)}} + \\
      & \phantom{\leq}\quad \E \sup_{f\in\huaF,\, \norm{\vect{w}}_{1} = R } g\bk{ -\sum_{i=1}^m \eps_i \sigma\bk{\vect{w}^T f(\vect{x}_i)}}  \\
      & = 2 \E \sup_{f\in\huaF,\, \norm{\vect{w}}_{1} = R } g\bk{ \sum_{i=1}^m \eps_i \sigma\bk{\vect{w}^T f(\vect{x}_i)}}  \\
      & \leq 2 \E \sup_{f\in\huaF,\, \norm{\vect{w}}_{1} = R } g\bk{ \sum_{i=1}^m \eps_i \vect{w}^T f(\vect{x}_i)}  \\
      & \leq 2 \E \sup_{f\in\huaF,\, \norm{\vect{w}}_{1} = R } g\bk{ \norm{\vect{w}}_1 \norm*{\sum_{i=1}^m \eps_i f(\vect{x}_i)}_\infty }  \\
      & \leq 2 \E \sup_{f\in\huaF} g\bk{ R \norm*{\sum_{i=1}^m \eps_i f(\vect{x}_i)}_\infty } \qedhere \\ 
  \end{align*}
\end{proof}
\begin{remark}
Note that Lemma~\ref{sup:lemma1} has the exact same form as Lemma~\ref{modified:lemma1}. From that we can continue the analysis as shown in Lemma~\ref{modified:lemma2} and Theorem~\ref{thm:dual_dep}.
\end{remark}

\end{document}